\documentclass[11pt]{article}

\usepackage[utf8]{inputenc}
\usepackage[T1]{fontenc}
\usepackage[margin=1in]{geometry}
\usepackage{microtype}
\usepackage{graphicx}
\usepackage{subcaption}
\usepackage{booktabs}
\usepackage{xcolor}
\usepackage{amsmath}
\usepackage{amssymb}
\usepackage{mathtools}
\usepackage{amsthm}
\usepackage[round,authoryear]{natbib}
\usepackage{hyperref}
\hypersetup{%
  colorlinks=true, linkcolor=blue, citecolor=blue, urlcolor=blue,%
  pdftitle={Improved Distribution Estimation in l-infinity},%
  pdfauthor={Doron Cohen, Aryeh Kontorovich, Yonatan Livshitz}%
}
\usepackage[capitalize,noabbrev]{cleveref}

\makeatletter
\@ifundefined{c@algorithm}{}{}
\makeatother

\theoremstyle{plain}
\newtheorem{theorem}{Theorem}[section]
\newtheorem{proposition}[theorem]{Proposition}
\newtheorem{lemma}[theorem]{Lemma}

\theoremstyle{definition}

\theoremstyle{remark}
\newtheorem{remark}[theorem]{Remark}

\newcommand{\abs}[1]{\left| #1 \right|}
\newcommand{\paren}[1]{\left( #1 \right)}

\newcommand{\N}{\mathbb{N}}

\newcommand{\supr}[1]{^{(#1)}}

\newcommand{\E}{\mathbb{E}}

\newcommand{\beq}{\begin{eqnarray*}}
\newcommand{\eeq}{\end{eqnarray*}}
\newcommand{\beqn}{\begin{eqnarray}}
\newcommand{\eeqn}{\end{eqnarray}}

\newcommand{\eqdef}{:=}

\newcommand{\mathe}{\mathrm{e}}

\newcommand{\kl}[2]{D_{\mathrm{KL}}\left(#1 || #2\right)}
\newcommand{\maavar}[2]{\overset{\textup{(#1)}}{#2}}

\newcommand{\secref}[1]{Section~\ref{#1}}

\newcommand{\lemref}[1]{Lemma~\ref{#1}}
\newcommand{\thmref}[1]{Theorem~\ref{#1}}

\def\epsilon{\varepsilon}

\title{Improved Distribution Estimation in \texorpdfstring{$\ell_{\infty}$}{l-infinity}}

\author{%
Doron Cohen \and Aryeh Kontorovich \and Yonatan Livshitz\\[0.5ex]
Department of Computer Science, Ben-Gurion University of the Negev, Beer-Sheva, Israel\\
\texttt{doronv@post.bgu.ac.il} \quad \texttt{karyeh@bgu.ac.il} \quad \texttt{livshitz@post.bgu.ac.il}
}

\date{}

\begin{document}
\maketitle




    \begin{abstract}
        We present improved bounds for estimating discrete probability distributions
        under the $\ell_{\infty}$ norm. These include minimax bounds in
        expectation and high-probability tail bounds. We resolve some of the open
        questions posed in Kontorovich and Painsky (JMLR, 2025) --- including 
        a fully empirical version of the tightest risk bound they presented and identifying
        the form of the worst-case
        extremal distribution. Encouraging empirical results are reported as well.
    \end{abstract}



    \section{Introduction}
    \label{sec:introduction}

    Estimating an unknown discrete distribution from i.i.d.\ samples is a
    classical problem, with a long history of sharp minimax theory under $\ell_{1}$
    and $\ell_{2}$ losses. In a range of modern applications---including uniform
    calibration, anomaly detection, and goodness-of-fit pipelines that trigger
    on the largest coordinate-wise discrepancy---the relevant performance metric
    is instead the sup-norm loss
    \[
        \|\hat p - p\|_{\infty}\;=\; \sup_{i\ge 1}|\hat p_{i}- p_{i}| ,
    \]
    where $p$ is a probability distribution over a (possibly infinite) countable
    alphabet and $\hat p$ is an estimator based on $n$ samples. 

    The $\ell_{\infty}$ loss is, in a sense, simple: an $O(1/\sqrt{n})$ risk
    decay holds uniformly over all discrete distributions. The technical challenge
    is therefore to obtain delicate, distribution-dependent \emph{fast rates} and
    fully empirical counterparts whose sharpness reflects that the effective difficulty
    is governed by the unknown tail profile of $p$, rather than merely by its
    support size.

    Motivated by various problems in applied statistics, a recent work of \citet{KonPai24}
    initiated the nonasymptotic study of $\ell_{\infty}$ distribution estimation
    in the countable-alphabet regime. In particular, they introduced two
    distribution-dependent complexity proxies that capture the interplay between
    variance and tail decay: \beq v^*(p) &\coloneqq& \sup_{i\ge 1} p_i(1-p_i), \\
    V^*(p) &\coloneqq& \sup_{i\ge 1} p^\downarrow_i(1-p^\downarrow_i)\,\log(i+1)
    \eeq (where $p^{\downarrow}$ is the nonincreasing permutation) and established
    upper bounds (in expectation and high probability) for the MLE $\hat p$ in terms
    of $v^{*}(p)$ and $V^{*}(p)$. They also raised several open questions, including
    (i) whether one can obtain \emph{fully empirical} (data-dependent)
    confidence bounds without unknown functionals of $p$, and (ii) what is the true
    \emph{least-favorable} distribution governing the worst-case risk.

    In this paper we sharpen and extend the theory in both directions. Our results
    give improved minimax bounds (in expectation and deviation form), identify a
    simple extremal distribution that is worst-case up to universal constants
    and can be chosen independently of $n$, and provide a tight \emph{fully
    empirical} high-probability guarantee that depends only on the observed
    sample through natural plug-in statistics. We also report empirical evidence
    demonstrating that these bounds are informative in finite samples.

    \paragraph{Contributions (informal).}
    \begin{itemize}
        \item \textbf{Minimax and least-favorable distribution.} We show that
            the worst-case expected $\ell_{\infty}$ error scales as
            $\Theta(n^{-1/2})$, and that a two-point distribution $p^{\star}=(\tfrac
            12,\tfrac12,0,\dots)$ is least-favorable up to a universal constant factor,
            uniformly for all sufficiently large $n$.

        \item \textbf{Fully empirical confidence bound.} We prove a high-probability
            inequality for $\|\hat p-p\|_{\infty}$ that is \emph{fully empirical},
            depending on $p$ only through plug-in quantities $\hat v^{*}$ and an
            empirical analogue $\hat V^{*}$ of $V^{*}(p)$.

        \item \textbf{Refined distribution-dependent regime picture.} Using
            sharp local Glivenko--Cantelli bounds, we relate the estimation
            error to two explicit complexity terms that exhibit different tail
            behaviors and quantify when each one dominates.
    \end{itemize}

    \section{Main Results}
    \label{sec:main_results}

    \paragraph{Setting.}
    Let $p=(p_{i})_{i\ge 1}$ be a distribution on $\N$ and let $X^{n}=(X_{1},\dots
    ,X_{n})$ be i.i.d.\ from $p$. Let $c_{i}(X^{n})\coloneqq \sum_{t=1}^{n}\mathbf{1}
    \{X_{t}=i\}$ and denote the empirical pmf (MLE) by $\hat p_{i}\coloneqq c_{i}
    (X^{n})/n$. We study the sup-norm error $\|\hat p-p\|_{\infty}$ and its
    expectation
    \[
        \Delta_{n}(p)\;\coloneqq\; \E_{p}\!\left[\|\hat p-p\|_{\infty}\right].
    \]
    Write $p^{\downarrow}$ for $p$ sorted in nonincreasing order, and define
    \begin{align*}
        v_{i}(p) & \coloneqq p_{i}(1-p_{i}),                                \\
        v^{*}(p) & \coloneqq \sup_{i\ge 1}v_{i}(p),                         \\
        V^{*}(p) & \coloneqq \sup_{i\ge 1}v_{i}(p^{\downarrow})\,\log(i+1).
    \end{align*}
    We also define empirical analogues
    \begin{align*}
        \hat v^{*} & \coloneqq \sup_{i\ge 1}\hat p_{i}(1-\hat p_{i}),                \\
        \hat V^{*} & \coloneqq \sup_{i\ge 1}\hat p_{[i]}(1-\hat p_{[i]})\,\log(i+1),
    \end{align*}
    where $\hat p_{[i]}$ are the order statistics of $(\hat p_{i})_{i\ge 1}$.

    We write $f\lesssim g$ if $f\le c g$ where$c$ is an absolute constant.

    \subsection{Minimax behavior and a fixed least-favorable distribution}

    Our first result resolves the extremal structure of the worst-case expected $\ell
    _{\infty}$ risk, answering an open question of \citet{KonPai24}.

    \begin{theorem}[A fixed least-favorable distribution; minimax rate]
        \label{thm:least-favorable} There exist universal constants $c>0$, $C>0$,
        and $n_{0}\in\N$ such that the following holds for all $n\ge n_{0}$. Let
        $p^{\star}\;\coloneqq\; (\tfrac12,\tfrac12,0,0,\dots)$. Then
        \[
            \Delta_{n}(p^{\star})\;\ge\; c\,\Delta_{n}(p)\qquad\text{for all
            distributions }p\text{ on }\N,
        \]
        and consequently
        \[
            c\,n^{-1/2}\;\le\;\sup_{p}\Delta_{n}(p)\;\le\; C\,n^{-1/2}.
        \]
    \end{theorem}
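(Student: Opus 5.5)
The argument proceeds in three steps: a lower bound for $p^\star$, a uniform upper bound for every $p$, and a comparison of the two.

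\emph{Lower bound for $p^\star$.} Under $p^\star$ the count $c_1(X^n)$ is $\mathrm{Bin}(n,\tfrac12)$ and $\hat p_2-\tfrac12=-(\hat p_1-\tfrac12)$. Hence
\[
\|\hat p-p^\star\|_\infty=\abs{\hat p_1-\tfrac12}.
\]
Two routes give $\Delta_n(p^\star)\ge c_0 n^{-1/2}$ for all $n\ge n_0$. The first is the central limit theorem, which yields $\E\abs{\hat p_1-\tfrac12}\sim\frac{1}{\sqrt{2\pi n}}$. The second is nonasymptotic: use the identity $\E|S-n/2|=\frac{n}{2}\binom{n}{\lfloor n/2\rfloor}2^{-n}$ (de Moivre's mean absolute deviation formula) together with Stirling's formula. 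Alternatively, Paley--Zygmund applied to $(\hat p_1-\tfrac12)^2$ works, since its second moment is $1/(4n)$ and its fourth moment is $O(n^{-2})$. The sup over $p$ is then at least $c_0 n^{-1/2}$.

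\emph{Uniform upper bound.} It suffices to show $\Delta_n(p)\le C n^{-1/2}$ for every $p$. The statement then follows with $c=c_0/C$, since
\[
\Delta_n(p^\star)\ge c_0n^{-1/2}\ge (c_0/C)\,\Delta_n(p).
\]
I would prove the upper bound via a one-dimensional reduction. The classes $\{\{i\}\}_{i\ge1}$ are subsets of a Vapnik--Chervonenkis class of dimension $1$. More concretely, reorder the atoms so that each singleton probability is a difference of two consecutive values of a CDF: set $F(k)=\sum_{i\le k}p_{\pi(i)}$. Then
\[
|\hat p_i-p_i|\le 2\sup_k|\hat F(k)-F(k)|.
\]
The Dvoretzky--Kiefer--Wolfowitz inequality gives $\Pr(\sup_k|\hat F-F|>t)\le 2e^{-2nt^2}$. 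Integrating this tail yields
\[
\E\|\hat p-p\|_\infty\le 2\,\E\sup_k|\hat F-F|\le C n^{-1/2}
\]
with $C=2\sqrt{\pi/2}$, uniformly over all countable $p$. This requires only a fixed ordering of $\N$; DKW holds for arbitrary distributions on $\mathbb{R}$, including discrete ones.

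\emph{Main obstacle.} There is no real obstacle, since the claim only asserts extremality up to a universal constant. The only delicate points are making the lower bound for $p^\star$ explicit and nonasymptotic, so that $n_0$ is universal, and checking the discrete DKW step. If a sharper constant were desired, for instance $c$ close to $1$, one would need to compare $\E\max_i|\hat p_i-p_i|$ against the Gaussian maximum and use $V^*$-type bounds from \citet{KonPai24}. That is not needed here.
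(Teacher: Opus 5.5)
Your proof is correct, but it takes a genuinely different and far more elementary route than the paper.

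\textbf{The paper's route.} It first decouples the multinomial coordinates into independent Bernoulli ones: negative association gives the lower comparison and a decoupling inequality gives the upper one. It then imports the two-sided local Glivenko--Cantelli characterization $\Delta_n(p)\asymp 1\wedge\bigl(\sqrt{S(p)/n}\vee M(p)/n\bigr)$ of \citet{blanchard2023tight} and maximizes this objective over $p$. To do so it reduces to uniform distributions $U_j$ and notes that $\log(j+1)/j$ is maximized at $j=2$ over $j\ge 2$. Finally it invokes Lemma~\ref{lem:M-behavior} ($M(p)\lesssim\log n$) to discard the sub-gamma term once $n\ge n_0$.

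\textbf{Your route.} You observe instead that extremality up to a universal constant needs only two facts. The first is $\Delta_n(p^\star)=\mathbb{E}\abs{\hat p_1-\tfrac12}\gtrsim n^{-1/2}$. The second is a distribution-free bound $\Delta_n(p)\lesssim n^{-1/2}$. Your DKW reduction for the second is valid. An even shorter argument is $\mathbb{E}\|\hat p-p\|_\infty\le(\mathbb{E}\|\hat p-p\|_2^2)^{1/2}=\bigl(\sum_i p_i(1-p_i)/n\bigr)^{1/2}\le n^{-1/2}$.

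\textbf{What each buys.} Your argument gives explicit constants and holds for every $n\ge 1$, so no $n_0$ is needed. For instance, H\"older's inequality $\mathbb{E}|Y|\ge(\mathbb{E}Y^2)^{3/2}/(\mathbb{E}Y^4)^{1/2}$ gives $c_0=1/(2\sqrt3)$ for all $n$. It relies on no imported characterization. It also shows how weak the statement really is: any distribution with $v^*(p)$ bounded below by a universal constant is ``least-favorable'' in exactly the same sense. The paper's heavier route buys a two-sided, distribution-dependent description of $\Delta_n(p)$ for every $p$, in terms of $S(p)$ and $M(p)$. Your argument does not provide that, but the theorem as stated does not need it.

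One small slip: the de Moivre formula you quote is exact only for even $n$. For $n=2m+1$ the correct value is $(m+1)\binom{n}{m}2^{-n}$, so your expression slightly underestimates it. This is harmless, since you only use it as a lower bound.
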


    \paragraph{Remark.}
    Theorem~\ref{thm:least-favorable} shows that, up to universal constants, the
    hardest instance for $\ell_{\infty}$ estimation is already present on a two-point
    alphabet, and does \emph{not} require an $n$-dependent effective support size.
    In particular, the global minimax rate in expectation is $\Theta(n^{-1/2})$,
    with a least-favorable $p^{\star}$ independent of $n$.

    \subsection{A fully empirical high-probability bound}

    Next we give a deviation inequality for $\|\hat p-p\|_{\infty}$ that is \emph{fully
    empirical}, replacing the unknown functionals $V^{*}(p),v^{*}(p)$ by observable
    plug-in quantities.

    \begin{theorem}[Fully empirical $\ell_{\infty}$ bound]
        \label{thm:empirical-Vstar} There exist universal constants $c_{1},c_{2}>
        0$ such that for all $\delta\in(0,1)$, with probability at least
        $1-2\delta-\frac{162}{n}$,
        \[
            \|\hat p-p\|_{\infty}
            \le c_{1}\sqrt{\frac{\hat V^{*}}{n}+\frac{\hat v^{*}}{n}\log\frac{1}{\delta}}
            + c_{2}\left(\frac{\log(n/\delta)}{n}+\frac{\log n}{n}\right).
        \]
    \end{theorem}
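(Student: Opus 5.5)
The plan is to go through a population-level bound in terms of $V^*(p)$ and $v^*(p)$, and then replace these functionals by their plug-in versions using concentration. Concretely, I will prove three things, each with failure probability at most $\delta$ or $O(1/n)$:
\begin{align*}
&\text{(a)}\quad \|\hat p-p\|_\infty \lesssim \sqrt{\tfrac{V^*(p)}{n}+\tfrac{v^*(p)}{n}\log\tfrac1\delta}+\tfrac{\log(1/\delta)}{n},\\
&\text{(b)}\quad v^*(p)\lesssim \hat v^*+\tfrac{\log(n/\delta)}{n},\\
&\text{(c)}\quad V^*(p)\lesssim \hat V^*+\tfrac{\log n}{n}.
\end{align*}
Substituting (b) and (c) into (a), and using $\sqrt{a+b}\le\sqrt a+\sqrt b$ together with $\sqrt{x\log n/n^2}\le (x+\log n)/n$, the additive terms collapse into $c_2\bigl(\frac{\log(n/\delta)}{n}+\frac{\log n}{n}\bigr)$. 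This gives the stated bound with probability $1-2\delta-162/n$.

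For (a), I apply Bernstein's inequality to each coordinate $i$ of the sorted distribution at confidence level $\delta_i=\delta/(i+1)^2$. The deviation at coordinate $i$ is then at most $\sqrt{2v_i(\log(1/\delta)+2\log(i+1))/n}+\frac{2}{3n}\log((i+1)^2/\delta)$. The variance part is controlled by $v_i\log(i+1)\le V^*$ and $v_i\le v^*$. The linear term, however, grows like $\log(i)/n$ for far-tail coordinates, and this has to be absorbed. The fix is the standard observation that for indices with $p^\downarrow_i\le 1/n$, the relevant event is only $\hat p_i$ being large; since $p_i^\downarrow\le 1/i$, a Chernoff/multiplicative bound shows that the tail contributes at most $O(\log(1/\delta)/n)$ plus a term dominated by $\sqrt{V^*/n}$. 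Alternatively, I can invoke the high-probability bound of \citet{KonPai24} directly, which I expect to have exactly this form.

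For (b), let $i^*$ attain $v^*(p)$. One-coordinate Bernstein gives $|\hat p_{i^*}-p_{i^*}|\lesssim\sqrt{v^*\log(1/\delta)/n}+\log(1/\delta)/n$. Since $x\mapsto x(1-x)$ is $1$-Lipschitz, it follows that $v^*\le \hat v^*+C\sqrt{v^*\log(1/\delta)/n}+C\log(1/\delta)/n$. Solving this quadratic inequality in $\sqrt{v^*}$ yields $v^*\le 2\hat v^*+C'\log(1/\delta)/n$. A sup is only needed if $v^*$ is not attained, and then an approximate maximizer suffices.

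Step (c) is the main obstacle. Here I need a lower bound on the empirical order statistics: $\hat p_{[i]}\gtrsim p^\downarrow_i$ simultaneously for all $i$ with $p^\downarrow_i\gtrsim \log n/n$, and with probability $1-O(1/n)$ rather than $1-\delta$. The first tool I would try is the relative (local) Glivenko--Cantelli bound mentioned in the introduction, applied to the class of indicator sets of the $i$ largest atoms. Alternatively, I would use a multiplicative Chernoff bound per coordinate with a union bound over the at most $n/\log n$ relevant coordinates: each such $j\le i$ satisfies $\hat p_j\ge p_j/2\ge p_i^\downarrow/2$ with probability $1-n^{-2}$. Having $i$ coordinates at least $p_i^\downarrow/2$ forces $\hat p_{[i]}\ge p^\downarrow_i/2$. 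Upper bounding is also needed when $p_i^\downarrow$ is near $1$, to control the factor $1-\hat p_{[i]}$; this requires handling the top coordinate separately, using the symmetry $x(1-x)$ and a complementary count. Coordinates with $p_i^\downarrow\lesssim\log n/n$ contribute $v_i\log(i+1)\lesssim \log n\cdot\log(i+1)/n$. Bounding this requires using $i\le 1/p_i^\downarrow$ to get $v_i\log(i+1)\le p\log(1/p)$, and then splitting at $p\approx 1/n$ to absorb everything into $O(\log n/n)$ up to constants. Getting this bookkeeping to yield only an additive $\log n/n$, and not $\log^2 n/n$, is the delicate point. If it fails, I would fall back on an additive term under the square root, which still gives the stated form.
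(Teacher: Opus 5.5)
Your route differs from the paper's and goes through, with one correction: step (a) as you state it is false. With only $\log(1/\delta)/n$ as the additive term, take $p$ uniform on $n$ symbols. Then $V^*\asymp\log n/n$ and $v^*\asymp 1/n$, so for fixed $\delta$ your right-hand side is $\asymp\sqrt{\log n}/n$. But the largest count when $n$ balls land in $n$ bins is $\asymp\log n/\log\log n$ with probability tending to one, so $\|\hat p-p\|_\infty\asymp\log n/(n\log\log n)$. Your Bernstein sketch fails at the tail step. For light atoms the honest estimate is $\Pr(\exists i:\ p_i\le 1/n,\ \hat p_i\ge k/n)\le\sum_i (np_i)^k/k!\le n/k!$, which forces $k$ to grow like $\log(n/\delta)$ up to a $\log\log$ factor. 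So (a) must carry an additive $\log(n/\delta)/n$ term. This, not (b) or (c), is the source of the $\log(n/\delta)/n$ and $\log n/n$ terms in the statement. Your fallback, Theorem 3 of \citet{KonPai24}, fixes this: it holds with probability $1-\delta-81/n$ and has additive term $\frac{4}{3n}\log\frac{2(n+1)}{\delta}+\frac{\log n}{n}$. The paper uses exactly this.

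Step (c) is easier than you fear, because an additive $\log^2 n/n$ is harmless: it sits under $\sqrt{\cdot/n}$, and $\sqrt{\log^2 n/n^2}=\log n/n$. Apply multiplicative Chernoff with a union bound over the at most $n/(16\log n)$ atoms with $p_j\ge 16\log n/n$. This gives $\hat p_{[i]}\ge p^\downarrow_i/2$ for every such $i$, with probability $1-1/n$. For $i\ge 2$ you automatically have $\hat p_{[i]}\le 1/2$, so $\hat p_{[i]}(1-\hat p_{[i]})\ge p^\downarrow_i/4$ and no complementary count is needed. The $i=1$ term is at most $v^*\log 2$, which is controlled by (b) together with $\hat V^*\ge\hat v^*\log 2$. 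Light atoms contribute at most $p\log(1+1/p)\le 16\log^2 n/n$. Altogether $V^*\le 4\hat V^*+16\log^2 n/n+C\log(2/\delta)/n$, and the total failure probability is $2\delta+82/n$.

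The paper takes a different route to the plug-in replacement.
\begin{itemize}
\item It uses two-sided perturbation bounds, $|V^*(p)-V^*(\hat p)|\le\epsilon\log(1/\epsilon)$ and $|v^*-\hat v^*|\le\epsilon$, where $\epsilon=\|\hat p-p\|_\infty$.
\item It then bounds $\epsilon$ by a second, self-bounding empirical estimate, $\epsilon\le a+\tfrac32 b^2+b\sqrt a+\tfrac32 b\sqrt{\hat v^*}$, taken from the proof of Theorem 4 of \citet{KonPai24}.
\item Finally it absorbs the resulting cross term by AM--GM.
\end{itemize}
This yields a reusable continuity estimate for $V^*$, but additive perturbation is lossy. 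The AM--GM split used there leaves $\tfrac12\sqrt{\hat v^*\log(n/\delta)/n}$. For $p=(\tfrac12,\tfrac12)$ and fixed $\delta$, this exceeds $c_1\sqrt{\hat V^*/n+\hat v^*\log(1/\delta)/n}$ by a $\sqrt{\log n}$ factor, so that absorption is not automatic. Your one-sided multiplicative comparisons use only the direction actually needed (population $\lesssim$ empirical) and are elementary. They leave only additive remainders of order $\log^2 n/n$ and $\log(1/\delta)/n$ under the square root, so the final bookkeeping is immediate.
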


    \begin{remark}
        The probability lower bound can be written in a standard $1-2\bar\delta$
        form by setting $\bar\delta \coloneqq \delta + 81/n$, since
        $1-2\delta-162/n = 1-2\bar\delta$. In typical usage $n$ is large and one
        may take $\delta \gtrsim 1/n$ so that the extra term is absorbed.
    \end{remark}

    \paragraph{Remark.}
    The bound in Theorem~\ref{thm:empirical-Vstar} adapts automatically to the
    unknown tail profile of $p$ through $\hat V^{*}$, while retaining the correct
    $\sqrt{\log(1/\delta)/n}$ behavior in the worst case (e.g.\ under $p^{\star}$
    from Theorem~\ref{thm:least-favorable}). Unlike bounds expressed in terms of
    $V^{*}(p)$, it can be evaluated directly from data.

    \subsection{Refined distribution-dependent bounds via local Glivenko--Cantelli}

    To connect with sharp distribution-dependent behavior beyond the global minimax
    regime, we also consider the functionals
    \begin{align*}
        S(p) & \;\coloneqq\;\sup_{j\ge 1}p_{j}^{\downarrow}\,\log(j+1),                                                 \\
        M(p) & \;\coloneqq\;\sup_{j\ge 1}\frac{\log(j+1)}{\log\!\Bigl(2+\frac{\log(j+1)}{n\,p_{j}^{\downarrow}}\Bigr)}.
    \end{align*}
    These quantities arise naturally in sharp local Glivenko--Cantelli theory \citep[cf.][]{CohenK23,blanchard2023tight,cohen25bEME}
    and yield a complementary ``regime picture'' for $\|\hat p-p\|_{\infty}$.

    \begin{theorem}[High-probability bound via LGC functionals]
        \label{thm:lgc-bound} There exists a universal constant $C > 0$ such that
        for all $\delta \in (0, e^{-2}]$, with probability at least $1-2\delta$,
        \[
            \|\hat p - p\|_{\infty}\;\le\;C\,\left(\sqrt{\frac{S(p)}{n}}\vee \frac{M(p)}{n}
            \right)\,\log\frac{1}{\delta}.
        \]
    \end{theorem}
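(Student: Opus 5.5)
We prove Theorem~\ref{thm:lgc-bound} by splitting $\|\hat p-p\|_\infty=\sup_j|\hat p_{j}-p_{j}|$ into the upper deviations $\sup_j(\hat p_j-p_j)$ and the lower deviations $\sup_j(p_j-\hat p_j)$. Each side is controlled with probability $1-\delta$, and a union bound gives the stated $1-2\delta$.

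We may relabel so that $p$ is nonincreasing, $p_j=p^\downarrow_j$. This is harmless: the sup-norm error is permutation invariant, and $S,M$ depend only on $p^\downarrow$. For each fixed $j$ we use a Bernstein-type inequality in its sharp Bennett/Chernoff (Poisson-tail) form, together with the Chernoff lower-tail bound. At confidence level $\delta_j=\delta^{\,?}/(j+1)^{2}$-type weights, these give
\[
|\hat p_j-p_j|\;\lesssim\;\sqrt{\frac{p_j\,\log(1/\delta_j)}{n}}\;+\;\frac{\log(1/\delta_j)}{n\log\!\bigl(2+\frac{\log(1/\delta_j)}{np_j}\bigr)} .
\]
The second term is the refined Bennett term. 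It comes from $h(u)=(1+u)\log(1+u)-u\asymp u\log u$ for large $u$, which is exactly what produces the $\log(2+\cdot)$ denominator in $M(p)$. The lower tail has no such term, since $p_j-\hat p_j\le p_j$ and Chernoff gives a pure $\sqrt{p_j\log(1/\delta_j)/n}$ bound, capped at $p_j$.

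The weights need care. A naive union bound with $\log(1/\delta_j)=\log(1/\delta)+2\log(j+1)$ would give an additive $\log(1/\delta)$ rather than a multiplicative one. To get the multiplicative form we choose $\log(1/\delta_j)\asymp\log(j+1)\cdot\log(1/\delta)$. Then
\[
\sum_j\delta_j=\sum_j(j+1)^{-c\log(1/\delta)}\le\delta
\]
as soon as $\log(1/\delta)\ge 2$; this is where the hypothesis $\delta\le e^{-2}$ enters. With this choice:
\begin{itemize}
\item the variance term becomes $\sqrt{p_j\log(j+1)/n}\cdot\sqrt{\log(1/\delta)}\le\sqrt{S(p)/n}\,\log(1/\delta)$;
\item the Bennett term becomes at most
\[
\log(1/\delta)\cdot\frac{\log(j+1)}{n\log\bigl(2+\frac{\log(j+1)}{np_j}\bigr)}\le\log(1/\delta)\,\frac{M(p)}{n},
\]
using that $x\mapsto x/\log(2+x/a)$ is increasing and that the factor $\log(1/\delta)$ inside the log only enlarges the denominator.
\end{itemize}
Taking the supremum over $j$ and combining the two tails yields the claimed bound with $\sqrt{S/n}\vee M/n$, since a sum is at most twice a max.

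The main obstacle is the per-coordinate upper-tail inequality in the small-$p_j$ regime, $np_j\ll\log(j+1)$. Here we must invert $np_j\,h(u)=L$ with $u=t/p_j$ and show
\[
t\lesssim\sqrt{p_jL/n}+\frac{L}{n\log(2+L/(np_j))}
\]
uniformly. I would do this by cases. When $L\le np_j$, use $h(u)\ge u^2/3$ for $u\le1$. When $L>np_j$, use $h(u)\ge\tfrac12u\log(1+u)$, and check that $u\asymp (L/np_j)/\log(L/np_j)$ solves the inequality.

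We must also verify that the choice $L_j=c\log(j+1)\log(1/\delta)$ moves the log term only by constants. This holds because $\log(2+ab)\le\log(2+a)+\log(2+b)$ and $\log(2+\log(1/\delta))\lesssim\log(1/\delta)$, so it can be absorbed into the multiplicative factor. Coordinates with $p_j=0$ contribute nothing, and the bound is trivially consistent since $\hat p_j=0$ almost surely for them.
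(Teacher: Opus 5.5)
Your argument is correct, and it takes a more direct route to the key step than the paper does. The skeleton is the same in both proofs. You split into upper and lower deviations. You give the coordinate of rank $j$ a failure probability polynomial in $j$ with exponent proportional to $\log(1/\delta)$: the paper uses $(2i)^{-k}$ with $k=\log(2/\delta)/\log 2$, and you use $(j+1)^{-c\log(1/\delta)}$. You then union bound, with $\delta\le e^{-2}$ making the series sum to at most $\delta$.

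The difference is in how the per-coordinate deviation level is obtained. The paper defines it implicitly as a tail quantile $\epsilon_i$. It converts this, via the Zhang--Zhou anti-concentration bound, into a lower bound $\kl{p_i+\tilde\epsilon}{p_i}\gtrsim\log(2i)/n$, then scales by convexity of the KL divergence. Only at the end does it invoke Proposition~13 of Blanchard et al.\ to bound $\sup_i\epsilon_i$ by $\sqrt{S(p)/n}\vee M(p)/n$, and along the way it needs the reduction to $p_i\le 1/2$ and the clipping $\tilde\epsilon$. You instead invert Bennett's bound $\Pr(\hat p_j-p_j\ge t)\le\exp(-np_j h(t/p_j))$ explicitly, and use the sub-Gaussian lower tail $\exp(-nt^2/(2p_j))$. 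Both bounds hold for every $p_j\in[0,1]$, so no symmetry reduction is needed. $S(p)$ and $M(p)$ then appear directly, and the multiplicative $\log(1/\delta)$ comes simply from $L_j\ge\log(j+1)$ enlarging the denominator $\log(2+L_j/(np_j))$.

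What the paper's route buys is a bound phrased through an intrinsic level $\epsilon$, defined by the actual tail probabilities. That is the natural scale for matching lower bounds and for linking to the two-sided expectation characterization. For the upper bound alone, your self-contained argument is simpler and easier to check.

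Two small fixes:
\begin{itemize}
\item In the regime $y=L/(np_j)>1$, take $u\asymp y/\log(1+y)$, e.g.\ $u=4y/\log(1+y)$, rather than $y/\log y$, which blows up as $y\downarrow 1$.
\item Your closing remark about $\log(2+ab)\le\log(2+a)+\log(2+b)$ is unnecessary, since only the lower bound on the denominator is used.
\end{itemize}
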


    \begin{lemma}[Size of $M(p)$ and examples]
        \label{lem:M-behavior} There are universal constants $C_{1},c_{1}>0$ such
        that, for every distribution $p$ on $\N$ and every $n\in \N$,
        \[
            c_{1}\;<\;M(p)\;<\; C_{1}\log n.
        \]
        Moreover, there are universal constants $C_{2},c_{2}>0$ and there exist distributions
        $p_{1},p_{2}$ on $\N$ such that for all $n\in\N$,
        \[
            M(p_{1})\;>\; C_{2}\log n, \qquad M(p_{2})\;<\;c_{2}.
        \]
    \end{lemma}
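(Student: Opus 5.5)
Write $x_j\coloneqq \log(j+1)$ and $a_j\coloneqq n p_j^{\downarrow}$, so that each term of $M(p)$ is
\[
  g_j\;\coloneqq\;\frac{x_j}{\log\!\bigl(2+x_j/a_j\bigr)},
\]
with the convention $g_j=0$ when $p_j^{\downarrow}=0$, since the denominator is then infinite.

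\textbf{Lower bound.} Take $j=1$. Since $p_1^{\downarrow}\ge p_j^{\downarrow}$ for all $j$ and $\sum_j p_j^{\downarrow}=1$, we have $p_1^{\downarrow}>0$, so $a_1>0$. It may happen that $x_1/a_1$ is large, for instance when $p_1^{\downarrow}$ is tiny. If the statement is read literally for every $p$ and every $n$, the constant $c_1$ can therefore only be justified when $p_1^{\downarrow}$ is bounded below relative to $1/n$, e.g.\ $a_1\ge \log 2$ gives $g_1\ge \log 2/\log 3$. I would handle the general case by noting that $M(p)\ge \sup_j g_j$ and picking $j$ so that $a_j\gtrsim x_j$. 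Such an index need not exist when $n$ is small compared with the spread of $p$.

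I would therefore interpret the lemma in the regime the paper cares about, namely $n\ge 1$ with mass of order one on the top atom. A cleaner route is to use $p_1^{\downarrow}\ge$ any $p_j$. If $\sup_j p_j\to 0$ this argument fails, and I would note that the claim then needs $n$ large enough. This degeneracy is the main point of caution in the lower bound.

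\textbf{Upper bound $M(p)\lesssim \log n$.} Use $j\,p_j^{\downarrow}\le \sum_{i\le j}p_i^{\downarrow}\le 1$, so $a_j\le n/j$. Split on $j$.
\begin{itemize}
  \item For $j\le n^2$, use the denominator bound $\log(2+\cdot)\ge \log 2$ to get $g_j\le x_j/\log 2\le \log(n^2+1)/\log 2\lesssim \log n$.
  \item For $j>n^2$, we have $a_j\le n/j<1/\sqrt j$, so $x_j/a_j\ge \sqrt j\,\log(j+1)\ge \sqrt j$. Hence the denominator is at least $\tfrac12\log j$, and $g_j\le 2\log(j+1)/\log j\le 4$.
\end{itemize}
Combining the two ranges gives $M(p)\le C_1\log n$ for $n\ge 2$. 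For $n=1$ the same split gives a constant bound, which I would absorb by the usual convention or by writing $\log(n+1)$.

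\textbf{Examples.} For $p_2$, take the point mass $p_2=\delta_1$. Then only $j=1$ contributes, and $g_1=\log 2/\log(2+\log 2/n)\le 1$, so $M(p_2)<c_2$ for any $c_2>1$.

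For $p_1$, the aim is an index $j$ with $x_j\approx\log n$ and $a_j\gtrsim x_j$, which forces $p_j^{\downarrow}\gtrsim \log n/n$. We also need this to work for all $n$ simultaneously with one fixed distribution. Take the heavy tail $p_j\propto 1/\bigl(j\log^2(j+1)\bigr)$. At $j\approx n/\log^3 n$ this gives $n p_j\approx \log n\approx x_j$, so $g_j\gtrsim \log n/\log 3$.

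The \textbf{main obstacle} is exactly this uniform-in-$n$ construction for $p_1$. I would verify it by plugging in $j_n=\lfloor n/\log^3(n+2)\rfloor$ and checking that $x_{j_n}\ge \tfrac12\log n$ for large $n$, and I would handle small $n$ by adjusting $C_2$.
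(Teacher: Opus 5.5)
\textbf{The gap is the universal lower bound $M(p)>c_1$.} Your upper bound and your two examples are correct. The upper bound uses $p_j^{\downarrow}\le 1/j$ and splits at $j=n^2$, which is essentially the paper's route: the paper compares with the vector $(1/j)_j$ and splits at $(n')^2$. Your point mass and your $1/(j\log^2(j+1))$ tail work as well as the paper's $(\tfrac12,\tfrac12,0,\dots)$ and $6/(\pi^2 j^2)$. The $n=1$ degeneracy of $C_1\log n$ affects the paper too, which only treats $n\ge 2$. The lower bound, however, you do not prove. You search only for an index with $np_j^{\downarrow}\gtrsim\log(j+1)$ and correctly note that none need exist. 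You then retreat to a restricted regime and assert that the claim needs $n$ large when $\sup_j p_j$ is small. That assertion is false: the bound holds for every distribution and every $n\ge 1$.

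\textbf{The missing idea.} $g_j\ge c$ does not require $a_j\gtrsim x_j$. It only requires $\log(2+x_j/a_j)\le x_j/c$, i.e.\ $np_j^{\downarrow}\ge \log(j+1)/\bigl((j+1)^{1/c}-2\bigr)$. For $c=1/3$ these thresholds are summable with a small total, so argue by contradiction as the paper does. If $M(p)<1/3$, then $p_j^{\downarrow}<\log(j+1)/\bigl(n((j+1)^3-2)\bigr)$ for every $j$. Summing gives $1<\frac1n\sum_{k\ge 2}\frac{\log k}{k^3-2}\le\frac1n\cdot\frac43\bigl(\frac{\log 2}{4}+\frac1{16}\bigr)<1$, a contradiction. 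Hence $M(p)\ge 1/3$ universally.

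Your worrying case is rescued by large indices. For the uniform distribution on $A$ symbols with $n=1$, the term $j=A$ equals $\log(A+1)/\log\bigl(2+A\log(A+1)\bigr)\to 1$. Spreading the mass makes $\log(1/p_j^{\downarrow})$ and the numerator $\log(j+1)$ grow together, so the ratio stays of order one.
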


    \begin{lemma}[Size of $S(p)$]
        \label{lem:S-behavior} There is a constant $c >0$ such that for every distribution
        $p$ on $\N$,
        \[
            S(p)\;<\;c.
        \]
        Moreover, for every $\epsilon > 0$, there exists a distribution $p$ on
        $\N$ such that
        \[
            S(p)\;<\;\epsilon.
        \]
    \end{lemma}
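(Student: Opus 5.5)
The upper bound rests on the elementary fact that the sorted sequence satisfies $j\,p_j^{\downarrow}\le \sum_{i\le j}p_i^{\downarrow}\le 1$, so $p_j^{\downarrow}\le 1/j$. Hence
\[
S(p)=\sup_{j\ge1}p_j^{\downarrow}\log(j+1)\le \sup_{j\ge 1}\frac{\log(j+1)}{j}.
\]
The function $x\mapsto \log(x+1)/x$ is decreasing on $(0,\infty)$, because its derivative has the sign of $x/(x+1)-\log(1+x)<0$. The supremum over $j\ge1$ is therefore attained at $j=1$ and equals $\log 2$. Any $c>\log 2$ then gives the strict inequality $S(p)<c$; for instance $c=1$ works. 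If strictness with $c=\log 2$ itself were wanted, one would note that equality forces $p_1^{\downarrow}=1$, but this refinement is unnecessary since the statement only asks for some constant.

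For the second claim, fix $\epsilon>0$. One might first try to spread mass, taking $p$ uniform on $\{1,\dots,N\}$. Then $S(p)=\sup_{j\le N}\log(j+1)/N=\log(N+1)/N$, which tends to $0$ as $N\to\infty$. Choosing $N$ with $\log(N+1)/N<\epsilon$ gives a distribution with $S(p)<\epsilon$, and such an $N$ exists because the sequence tends to $0$.

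I do not expect a genuine obstacle. The only step needing care is the monotonicity of $\log(x+1)/x$, which is a one-line derivative check. It is also worth remarking that the interesting contrast with Lemma~\ref{lem:M-behavior} is that $S(p)$ is bounded but can be made arbitrarily small. Its smallness, however, requires spreading mass over many atoms, namely $N\gtrsim \epsilon^{-1}\log(1/\epsilon)$ of them.
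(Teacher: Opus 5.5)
Your proof is correct and follows the paper's argument: you bound $S(p)$ via $p_j^{\downarrow}\le 1/j$ by $\sup_{j\ge1}\log(j+1)/j=\log 2$, and for the second claim you take the uniform distribution on $N$ atoms, whose value $\log(N+1)/N$ tends to $0$. Your derivative check that $\log(x+1)/x$ is decreasing is cleaner than the paper's claim that this function peaks near $x\approx 2.16$, which is a slip. One small correction to your aside: $c=\log 2$ genuinely fails for the strict inequality, because the point mass has $p_1^{\downarrow}=1$ and attains $S(p)=\log 2$.
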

    \paragraph{Remark.}
Theorem~2.4 shows that the distribution-dependent behavior of
\(\|\widehat p-p\|_\infty\) is governed by two different features of the
ordered mass profile \(p^\downarrow\).  The term \(S(p)\) measures the
largest local variance block: at rank \(j\), the quantity
\(p^\downarrow_j\log(j+1)\) is the cost of simultaneously controlling
roughly the first \(j\) coordinates when each has mass at least
\(p^\downarrow_j\).  Thus \(\sqrt{S(p)/n}\) is the sub-Gaussian part of
the error.  It dominates when some block of coordinates carries enough probability mass
for ordinary variance fluctuations to be the main source of error.
The term \(M(p)\) captures a different phenomenon.  It is sensitive to
how many low-probability symbols are still visible at sample size \(n\).
Such symbols may each have very small variance, but because there can be
many of them, the maximum empirical fluctuation can be driven by rare
counts in the tail.  This is the sub-gamma, or Poissonian, part of the
local Glivenko--Cantelli behavior. 

The examples in Lemmas~2.5--2.6 illustrate the range of possibilities.
For a very concentrated distribution such as
\((1/2,1/2,0,\ldots)\), the tail term \(M(p)\) remains only of constant
order, and the usual \(n^{-1/2}\) variance behavior is the relevant one.
For other profiles, such as the polynomial-tail example used in the
proof of Lemma~2.5, \(M(p)\) can be as large as order \(\log n\), showing
that the tail contribution can be genuinely non-negligible.  On the
other hand, if the mass is spread over a very large effective alphabet,
as in a uniform distribution on \(A\) symbols with large \(A\), then
\(S(p)=\log(A+1)/A\) becomes very small, while \(M(p)\) stays in its
constant-scale regime.  In that case both contributions
\(\sqrt{S(p)/n}\) and \(M(p)/n\) are small.  Thus the theorem captures
three qualitatively different situations: variance domination,
tail domination, and highly spread-out profiles where both mechanisms
lead to a small error.

    \section{Related Work}
    \label{sec:related_work}

    A sharp nonasymptotic study of $\ell_{\infty}$ distribution estimation over countable
    alphabets was initiated by \citet{KonPai24}; see also the works cited therein.
    They introduced the distribution-dependent complexity proxies $v^{*}(p)$ and
    $V^{*}(p)$, and derived nearly optimal expectation and high-probability bounds
    for the MLE in terms of these quantities, along with partially empirical
    counterparts. Our refined distribution-dependent bounds also draw on the recently
    developed \emph{local Glivenko--Cantelli} framework, which seeks dimension-free,
    distribution-dependent uniform convergence rates beyond the classical worst-case
    $\sqrt{\log d/n}$ regime. \citet{CohenK23} established foundational
    characterizations and sharp rates for LGC in the product-measure setting, and
    subsequent works provided tighter regime descriptions and minimax optimality
    statements for the empirical mean estimator \citep{blanchard2023tight,BlanchardCK24,cohen25bEME}.

    \section{Proofs}

        \subsection{Proof of Theorem \ref{thm:least-favorable}}

    \begin{proof}[Proof of Theorem \ref{thm:least-favorable}]
        We prove the theorem by reducing the multinomial (dependent) sampling model
        to an independent Bernoulli model via two decoupling inequalities. For
        the independent model, sharp characterizations of the expected $\ell_{\infty}$
        error are available from \citet{blanchard2023tight}. We then maximize these
        tight bounds over all admissible distributions to identify the least-favorable
        distribution $p^{\star}$.

        Let $X^{(1)},\dots,X^{(n)}$ be independent and identically distributed random
        vectors with
        \[
            X^{(j)}\sim \mathrm{Multinomial}(1,\mathbf{p}), \qquad j=1,\dots,n,
        \]
        where $\mathbf{p}=(p_{1},p_{2},\dots)$ is a probability vector on $\mathbb{N}$.
        For each coordinate $i\ge1$, define the centered empirical deviations
        \[
            Z_{i} \coloneqq \frac{1}{n}\sum_{j=1}^{n}X^{(j)}_{i}- p_{i}, \qquad Z
            _{i}' \coloneqq p_{i} - \frac{1}{n}\sum_{j=1}^{n}X^{(j)}_{i},
        \]
        where $X^{(j)}_{i}$ denotes the $i$th coordinate of $X^{(j)}$.

        Define also $\tilde Z_{i}, \tilde Z_{i}'$ be independent random
        variables with the same marginal distributions as $Z_{i}$ and $Z_{i}'$, respectively.
        Define
        $\tilde\Delta_{n}(p) \coloneqq \E\max(\max_{i }\tilde Z_{i},\max_{i }\tilde
        Z_{i}')$.
        \begin{align*}
            \E\|\hat{\boldsymbol{p}}- \boldsymbol{p}\|_{\infty} & \quad = \E \max_{i}\left|\frac{1}{n}\sum_{j\in [n]}X_{i,j}- p_{i}\right| \\
                                                                & \quad = \E\left[\max\left(\max_{i}Z_{i}, \max_{i}Z_{i}'\right)\right].
        \end{align*}
        Combining Corollary 3 in \citet{BlanchardCK24} and
        \ref{lem:negative_covariance_multinomial},
        \begin{align*}
            \E\max_{i}Z_{i}\geq \frac{1}{2}\E\max_{i}\tilde Z_{i}, \quad \E\max_{i}Z_{i}' \geq \frac{1}{2}\E\max_{i}\tilde Z_{i}'
        \end{align*}
        Thus,
        \begin{align*}
            \E\max({\max_{i}Z_i,\max_{i}Z_i'}) & \geq \E\frac{\max_{i }Z_{i}+\max_{i}Z_{i}'}{2}                  \\
                                               & \geq \frac{\E\max_{i }\tilde Z_{i}+ \E\max_{i}\tilde Z_{i}'}{4}
        \end{align*}
        \begin{align*}
            \frac{\E\max_{i }\tilde Z_{i}+ \E\max_{i }\tilde Z_{i}'}{4} & \geq\frac{1}{4}\E\max(\max_{i }\tilde Z_{i},\max_{i }\tilde Z_{i}') \\
                                                                        & = \frac{1}{4}\tilde\Delta_{n}(p).
        \end{align*}

        By \citet[Proposition 3]{kontorovich2023decoupling},
        \[
            \frac{\mathe}{\mathe-1}\tilde\Delta_{n}(p) \geq\E\|\hat{\boldsymbol{p}}
            - \boldsymbol{p}\|_{\infty}.
        \]

        Thus we have:
        \[
            \frac{\mathe}{\mathe-1}\tilde\Delta_{n}(p) \geq\E\|\hat{\boldsymbol{p}}
            - \boldsymbol{p}\|_{\infty}\ge \frac{1}{4}\tilde\Delta_{n}(p)
        \]
        
        For the independent Bernoulli model, \citet{blanchard2023tight} proved
        that for $\mathbf{p}\in[0,1/2]^{\mathbb{N}}_{\downarrow 0}$ and
        $\hat{\boldsymbol{p}}=(\frac{1}{n}\sum_{j=1}^{n}\tilde{X}_{1j}, \dots )$
        there exist universal constants $c_{1},c_{2}>0$ such that

        \begin{align*}
            c_{1}\left( 1 \wedge\left( \sqrt{\frac{S(p)}{n}}\vee \frac{M(p)}{n}\right)\right) \leq \tilde\Delta_{n}(p)    \\
            \tilde\Delta_{n}(p) \leq c_{2} \left( 1 \wedge\left( \sqrt{\frac{S(p)}{n}}\vee \frac{M(p)}{n}\right) \right).
        \end{align*}

        Note that the restriction $\mathbf{p}\in[0,1/2]^{\mathbb{N}}_{\downarrow 0}$ is justified. Indeed, define $\mathbf{p'}$ such that for all $i$
        \[
        p'_i=
        \begin{cases}
        1-p_i & p_i > \frac{1}{2},\\
        p_i, & p_i \le \frac{1}{2}.
        \end{cases}
        \]
         Because of the symmetry of bernuli $\tilde\Delta_{n}(\mathbf{p}) = \tilde\Delta_{n}(\mathbf{p'})$. Because of $\sum_ip_i =1$ the only index of $\mathbf{p}$ that can exceed $\frac{1}{2}$ is $p_1$ and thus $\sum_ip'_i \le 1$ holds. From now on we will treat every distribution $\mathbf{p}$ as if it undergone the transformation to $\mathbf{p'}$ and thus use the assumptions $\mathbf{p}\in[0,1/2]^{\mathbb{N}}_{\downarrow 0}$, $\sum_ip_i \le 1$.

        Thus, the following holds for some constants $C_{1},C_{2}$:
        \begin{align*}
             & C_{1}
             \left( 1 \wedge\left( \sqrt{\frac{S(p)}{n}}\vee \frac{M(p)}{n}\right)\right) \leq \E\|\hat{\boldsymbol{p}}- \boldsymbol{p}\|_{\infty} \\
             & \E\|\hat{\boldsymbol{p}}- \boldsymbol{p}\|_{\infty}\leq C_{2} \left( 1 \wedge\left( \sqrt{\frac{S(p)}{n}}\vee \frac{M(p)}{n}\right) \right).
        \end{align*}

        Since $\mathbb{E}\|\hat p - p\|_{\infty}$ is bounded above and below by universal
        constant multiples of the objective in \ref{eq:max_S_M}, to find $p^{*}$
        it suffices to characterize the maximizers of this objective.
        \begin{equation}
            \label{eq:max_S_M}\sup_{j \geq 1}\Biggl( \sqrt{\frac{p_{j} \log(j+1)}{n}}
            \vee \frac{\log(j+1)}{n \log\left(2 + \frac{\log(j+1)}{np_{j}}\right)}
            \Biggr).
        \end{equation}
        
        Let $p$ be any distribution and let $j^{\star}$ attain the supremum in
        \ref{eq:max_S_M}. Because $p$ is sorted in non-increasing order and $\sum
        _{i} p_{i} \le 1$, we have $p_{j^\star}\le 1/j^{\star}$. Furthermore, for
        fixed $j$, both terms in \ref{eq:max_S_M} are monotone increasing functions
        of $p_{j}$.

        Define $U_{j^\star}$ as the uniform distribution over $\{1,\dots,j^{\star}
        \}$. Then $U_{j^\star}= 1/j^{\star} \ge p_{j^\star}$, and therefore
        \begin{align*}
            &\left( \sqrt{\frac{U_{j^\star,j^\star}\log(j^{\star}+1)}{n}}
              \vee \frac{\log(j^{\star}+1)}{n\log\!\left(2+\frac{\log(j^{\star}+1)}{n U_{j^\star,j^\star}}\right)}\right) \\
            &\quad\ge
              \left( \sqrt{\frac{p_{j^\star}\log(j^{\star}+1)}{n}}
              \vee \frac{\log(j^{\star}+1)}{n\log\!\left(2+\frac{\log(j^{\star}+1)}{n p_{j^\star}}\right)}\right).
        \end{align*}

        Hence the supremum over all distributions is attained by some uniform
        distribution on a finite support. Consequently, the search for extremal
        distributions may be restricted to the class $\{U_{j} : j \ge 2\}$. Note that $U_1$ is not relevant because $p\in[0,1/2]^{\mathbb{N}}_{\downarrow 0}$.

        It remains to determine the dimension of this uniform distribution, that
        is finding the following value
        \[
            \arg\max_{j \geq 2}\Biggl( \sqrt{\frac{\log(j+1)}{jn}}\vee \frac{\log(j+1)}{n
            \log\left(2 + \frac{j\log(j+1)}{n}\right)}\Biggr).
        \]
        Rewritten 
        \[
            \sup_{j \geq 2}\Biggl( \sqrt{\frac{\log(j+1)}{jn}}\Biggr)\vee \sup_{j
            \geq 2}\Biggl(\frac{\log(j+1)}{n \log\left(2 + \frac{j\log(j+1)}{n}\right)}
            \Biggr).
        \]
        Because $\arg\max_{j \ge2 }\frac{1}{j}\log(j+1) =2$ this becomes
        \[
            \sqrt{\frac{\log3}{2} \cdot\frac{1}{n}}\vee\frac{1}{n}\sup_{j \geq 1}
            \Biggl(\frac{\log(j+1)}{ \log\left(2 + \frac{\log(j+1)}{np_{j}}\right)}
            \Biggr).
        \]
        From \ref{lem:M-behavior} we know that for some constant $C$ and for all
        $n$, $C{\log n}\ge M(p)$. Thus, because of the faster decay in $n$ there
        exists $n_{0}$ for which for all $n \ge n_{0}$, $\sqrt{\frac{\log3}{2}
        \cdot\frac{1}{n}}>\frac{M(p)}{n}$. Thus for all $n \ge n_{0}$
        \[
            \arg\max_{j \geq 1}\Biggl( \sqrt{\frac{\log(j+1)}{jn}}\vee \frac{\log(j+1)}{n
            \log\left(2 + \frac{j\log(j+1)}{n}\right)}\Biggr) =2
        \]
        and $p^{*}= (\frac{1}{2},\frac{1}{2},0,\dots)$.
    \end{proof}

    \begin{proof}[Proof of Theorem \ref{thm:empirical-Vstar}]
        By Theorem 3 in \citet{KonPai24}, we have that with probability $\ge1 - \delta
        -\frac{81}{n}$,
        \begin{align*}
            \|p - \hat{p}\|_{\infty}
            &\le 2 \sqrt{\frac{V^{*}}{n} + \frac{v^{*}}{n} \log \frac{2}{\delta}}
            + \frac{4}{3n}\log \frac{2(n+1)}{\delta}+ \frac{\log n}{n}.
        \end{align*}
        It follows that
        \begin{align*}
            \|p - \hat{p}\|_{\infty}
            &\le 2 \sqrt{\frac{\hat{V}^{*}}{n} + \frac{\hat{v}^{*}}{n} \log \frac{2}{\delta}}
            + \sqrt{\frac{1}{n}}\sqrt{\abs{\hat{V}^{*} - V^*} + \abs{\hat{v}^{*} - v^*}
            \log \frac{2}{\delta}} \\
            &\qquad + \frac{4}{3n}\log \frac{2(n+1)}{\delta}+ \frac{\log n}{n}.
        \end{align*}
        Denote the bound in \citet[Lemma 4]{KonPai24} by
        \[
            f(n,\delta) =a + 3b^{2}/2 + b\sqrt{a}+ 3b\sqrt{\hat{v}^{*}}/2,
        \]
        where
        \begin{align*}
            a & = \frac{4}{3n}\log \frac{2(n+1)}{\delta}+ \frac{\log n}{n},      \\
            b & = 2\sqrt{\frac{\log(n+1)}{n}+ \frac{1}{n}\log \frac{2}{\delta}}.
        \end{align*}
        From Lemma \ref{lem:vstar-variation} we get
        \[
            \abs{\hat{V^*} - V^*}\le \|p-\hat{p}\|_{\infty}\log\frac{1}{\|p-\hat{p}\|_{\infty}}
            ,
        \]
        and from \ref{lem:empirical-concentration},
        \begin{align*}
            \|p - \hat{p}\|_{\infty}
            &\le 2 \sqrt{\frac{\hat{V}^{*}}{n} + \frac{\hat{v}^{*}}{n} \log \frac{2}{\delta}}
            + \sqrt{\frac{f(n,\delta)}{n}\log \frac{6n}{\delta}} \\
            &\qquad + \frac{4}{3n}\log \frac{2(n+1)}{\delta}+ \frac{\log n}{n}
        \end{align*}
        with probability $\ge1 - 2\delta - \frac{162}{n}$. Using the definitions
        of $a$ and $b$ from Lemma~\ref{lem:empirical-concentration}, we have
        \[
            f(n,\delta) \lesssim \frac{1}{n}\log\frac{n}{\delta}+ \sqrt{\frac{\hat
            v^{*}}{n}\log\frac{n}{\delta}}.
        \]
        The cross term in the expansion of $\|\hat p - p\|_{\infty}$ involves
        \begin{align*}
            T & := \sqrt{\frac{f(n,\delta)}{n}\log \frac{6n}{\delta}}                                                                     \\
              & \lesssim \frac{1}{n}\log\frac{n}{\delta}+ \left(\frac{\hat v^{*}}{n}\right)^{1/4}\frac{(\log(n/\delta))^{3/4}}{\sqrt{n}}.
        \end{align*}
        By the AM-GM inequality $\sqrt{xy}\le (x+y)/2$ with $x = \sqrt{\frac{\hat
        v^{*}}{n}\log(n/\delta)}$ and $y = \frac{1}{n}\log(n/\delta)$, the
        second term satisfies
        \begin{align*}
            \left(\frac{\hat v^{*}}{n}\right)^{1/4}\frac{(\log(n/\delta))^{3/4}}{\sqrt{n}}
            &= \sqrt{x y} \\
            &\le \frac{1}{2}\sqrt{\frac{\hat v^{*}}{n}\log\frac{n}{\delta}}
            + \frac{1}{2n}\log\frac{n}{\delta}.
        \end{align*}
        Thus, $T$ can be absorbed into the main variance term and the $O(1/n)$ term
        by adjusting the universal constants. Putting everything together,
        \begin{align*}
            \|p - \hat{p}\|_{\infty}
            &\le c_{1} \sqrt{\frac{\hat{V}^{*}}{n} + \frac{\hat{v}^{*}}{n} \log \frac{1}{\delta}}
            + c_{2} \left(\frac{\log(n/\delta)}{n}+ \frac{\log n}{n}\right).
        \end{align*}
    \end{proof}

    \subsection{Minimax result with $V^{*}$ constraint}
    \begin{proposition}[Minimax lower bound]
        \label{lem:minimax_V_star} Let $n \ge 2$ and let $V' \in (0, \frac{1}{4}\log
        2]$. Then for any estimator $\tilde{p}= \tilde{p}(X^{n})$, there exists a
        distribution $p$ supported on $\mathbb{N}$ such that $V^{*}(p) \le V'$ and
        \[
            \mathbb{E}_{p} \left[ \| \tilde{p}- p \|_{\infty} \right] \ge c \sqrt{\frac{V'}{n}}
            ,
        \]
        where $c > 0$ is a universal constant.
    \end{proposition}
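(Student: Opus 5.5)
Our plan is a two-point Le Cam argument on a two-symbol alphabet, with the two hypotheses placed so that their $V^*$ stays at most $V'$. For a parameter $q\in(0,1/2]$ to be chosen, and a perturbation $\epsilon\in(0,q/2]$, we consider
\[
p^{+}=(1-q-\epsilon,\; q+\epsilon,0,\dots),\qquad p^{-}=(1-q+\epsilon,\; q-\epsilon,0,\dots).
\]
Sorted, the largest mass is at least $1/2$ and the second is $q\pm\epsilon$. Since $v_i(p)=p_i(1-p_i)$ is symmetric in $p_i\leftrightarrow 1-p_i$, both nonzero coordinates have variance $(q\pm\epsilon)(1-q\mp\epsilon)$. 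Hence
\[
V^*(p^{\pm})=(q\pm\epsilon)(1-q\mp\epsilon)\max(\log 2,\log 3)\le \tfrac{3}{2}q\log 3 .
\]
We choose $q=\kappa V'$ with a small absolute $\kappa$, for instance $\kappa=1/(2\log 3)$. The hypothesis $V'\le\frac14\log 2$ then guarantees $q\le 1/2$, so both $p^{\pm}$ are valid distributions with $V^*(p^\pm)\le V'$.

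\textbf{Step 1 (separation).} We have $\|p^+-p^-\|_\infty=2\epsilon$. By the triangle inequality, any estimator $\tilde p$ satisfies $\|\tilde p-p^+\|_\infty+\|\tilde p-p^-\|_\infty\ge 2\epsilon$. Le Cam's lemma therefore gives
\[
\max_{\pm}\E_{p^\pm}\|\tilde p-p^\pm\|_\infty\ge \tfrac{\epsilon}{2}\bigl(1-\mathrm{TV}(P_+^{n},P_-^{n})\bigr).
\]

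\textbf{Step 2 (indistinguishability).} We bound TV through KL and Pinsker: $\mathrm{TV}\le\sqrt{\tfrac12 n\,\kl{p^+}{p^-}}$. The single-sample KL between these two Bernoulli laws is controlled by the $\chi^2$ divergence:
\[
\kl{p^+}{p^-}\le \frac{(2\epsilon)^2}{(q-\epsilon)(1-q+\epsilon)}\le \frac{16\epsilon^2}{q},
\]
where the last inequality uses $\epsilon\le q/2$ and $q\le1/2$.

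\textbf{Step 3 (choice of $\epsilon$).} Setting $\epsilon=\tfrac18\sqrt{q/n}$ gives $n\,\kl{p^+}{p^-}\le 1/4$, hence $\mathrm{TV}\le 1/\sqrt8<1/2$. The lower bound becomes
\[
\max_\pm \E\|\tilde p-p^\pm\|_\infty\ge \tfrac{\epsilon}{4}\gtrsim\sqrt{q/n}\asymp\sqrt{V'/n}.
\]

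The main obstacle is the admissibility condition $\epsilon\le q/2$. It is equivalent to $\tfrac18\sqrt{q/n}\le q/2$, that is, $q\ge 1/(16n)$, i.e.\ $V'\gtrsim 1/n$. In the complementary regime $V'\lesssim 1/n$ the target bound $c\sqrt{V'/n}$ is at most of order $1/n$. There we will switch to a separate construction with $q$ of order $1/n$. Note that $V^*\le V'$ alone does not preclude $\E\|\hat p-p\|_\infty$ being tiny, so this regime must be handled honestly. The fallback is to take $p^-=(1,0,\dots)$ and $p^+=(1-q,q,0,\dots)$ with $q=\kappa V'\le c/n$, and $\epsilon=q/2$. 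Then $\mathrm{TV}(P_+^n,P_-^n)=1-(1-q)^n\le nq\le 1/2$, so the risk is at least of order $q\asymp V'$. It remains to check that $V'\gtrsim\sqrt{V'/n}$ in this regime, which holds only when $V'\gtrsim 1/n$. So if the boundary case fails to close, we will verify whether the statement tacitly requires $V'\gtrsim 1/n$, and otherwise adjust constants so that the two regimes meet at $V'\asymp 1/n$.
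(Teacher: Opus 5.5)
Your argument for $V'\ge \log 3/(8n)$ is correct: the $\chi^2$ bound $16\epsilon^2/q$, Pinsker, and $\epsilon=\frac18\sqrt{q/n}$ all check out. The regime $V'<\log 3/(8n)$, however, is a genuine gap. Your plan to ``adjust constants so that the two regimes meet'' cannot close it, because in that regime the proposition is false.

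Put $\eta=2V'/\log 2$ and take $V'\le \log 2/16$, so that $4\eta\le 1/2$. Since $V^*(p)\ge p^\downarrow_1(1-p^\downarrow_1)\log 2$ and $\min(x,1-x)\le 2x(1-x)$, every $p$ with $V^*(p)\le V'$ either has an atom $a$ with $p_a\ge 1-\eta$, or has $\max_i p_i\le\eta$. Define $\tilde p=e_a$ (the point mass at $a$) if some symbol $a$ has count $c_a>n/2$, and $\tilde p=0$ otherwise.

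In the first case the error is $1-p_a\le\eta$ unless $c_a\le n/2$. That event has probability at most $\binom{n}{\lceil n/2\rceil}\eta^{\lceil n/2\rceil}\le(4\eta)^{n/2}\le 4\eta$.

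In the second case the error is $\max_i p_i\le\eta$ unless some count reaches $k=\lfloor n/2\rfloor+1\ge 2$. That event has probability at most $\binom{n}{k}\sum_a p_a^k\le 2^n\eta^{\lfloor n/2\rfloor}\le 8\eta$.

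Hence, for every $n\ge 2$,
\[
\sup_{p:\,V^*(p)\le V'}\E_p\|\tilde p-p\|_\infty\le 9\eta<26\,V'.
\]
This is incompatible with the claimed $c\sqrt{V'/n}$ once $V'<c^2/(676\,n)$; for instance, fix $n=2$ and let $V'\to 0$. So your fallback rate $V'$ is the true minimax rate there, not an artifact of your construction.

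The paper's own proof has exactly the same hidden restriction. It needs $\varepsilon=c_1\sqrt{\theta/n}\le\min\{\theta/2,\ \theta(2\log 2/\log 3-1)\}$, i.e.\ $\theta\gtrsim 1/n$, and dismisses this with ``for large $n$, $\varepsilon\ll\theta$,'' which is not uniform in $V'$. The statement should therefore be repaired in one of two ways. It can assume $V'\ge c_0/n$; with $c_0=\log 3/8$ your main argument is then a complete proof. Alternatively it can conclude $c\min\{\sqrt{V'/n},\,V'\}$, which your two constructions together establish, and which the estimator above shows is sharp for $V'\lesssim 1/n$.

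In the valid regime, your two-atom pair $(1-q\mp\epsilon,\,q\pm\epsilon)$ is a cleaner version of the paper's three-atom pair $(\theta\pm\varepsilon,\,\theta\mp\varepsilon,\,1-2\theta)$. The symmetry $v(x)=v(1-x)$ gives $V^*(p^\pm)=(q\pm\epsilon)(1-q\mp\epsilon)\log 3$ at once. This avoids the paper's rank-by-rank check of $V^*$ and its extra constraint on $\varepsilon$.
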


    \begin{proof}
        We use Le Cam's method with two hypotheses $p^{(1)}$ and $p^{(2)}$ constructed
        to satisfy the constraint $V^{*}(p^{(j)}) \le V'$ while remaining difficult
        to distinguish.

        \paragraph{Construction of Hypotheses.}
        Let $\theta = \frac{V'}{2\log 2}$. Since $V' \le \frac{1}{4}\log 2$, we
        have $\theta \le \frac{1}{8}$. Let $\varepsilon \in (0, \theta)$. Define
        two distributions on the support $\{1, 2, 3, 4, \dots\}$:
        \begin{align*}
            p^{(1)} & = \left( \theta + \varepsilon, \theta - \varepsilon, 1 - 2\theta, 0, \dots \right), \\
            p^{(2)} & = \left( \theta - \varepsilon, \theta + \varepsilon, 1 - 2\theta, 0, \dots \right).
        \end{align*}
        We first verify the constraint $V^{*}(p^{(j)}) \le V'$. The functional
        is defined as $V^{*}(p) = \sup_{k\ge 1}p_{(k)}(1-p_{(k)})\log(k+1)$, where
        $p_{(k)}$ denotes the $k$-th largest probability. Since
        $\theta \le \frac{1}{8}$ and $\varepsilon < \theta$, the mass $\theta + \varepsilon
        < \frac{1}{4}$ is at least $1 - 2\theta > \frac{3}{4}$, making it
        strictly larger than $\theta + \varepsilon$. Thus, for both hypotheses, the
        sorted probability vector $p^{(j)\downarrow}$ is:
        \[
            p^{(j)\downarrow}= (1-2\theta, \theta+\varepsilon, \theta-\varepsilon
            , 0, \dots).
        \]
        We evaluate the term
        $v_{k} \log(k+1) = p^{(j)\downarrow}_{(k)}(1-p^{(j)\downarrow}_{(k)})\log
        (k+1)$
        for each index $k$:
        \begin{itemize}
            \item \textbf{$k=1$}: The mass is $1-2\theta$.
                \begin{align*}
                    v_{1} \log 2 & = (1-2\theta)(2\theta) \log 2 = 2\theta(1-2\theta) \log 2 \\
                                 & \le 2\theta \log 2 = V'.
                \end{align*}
                The inequality holds because $1-2\theta < 1$. By our choice of $\theta$,
                this term is exactly bounded by $V'$ (ignoring the $1-2\theta$ factor)
                and effectively sets the scale.

            \item \textbf{$k=2$}: The mass is $\theta + \varepsilon$.
                \begin{align*}
                    v_{2} \log 3 & = (\theta + \varepsilon)(1 - (\theta + \varepsilon)) \log 3 \\
                                 & \le (\theta + \varepsilon) \log 3.
                \end{align*}
                We require
                $(\theta + \varepsilon) \log 3 \le V' = 2\theta \log 2$. Since $\log
                3 \approx 1.1$ and $2\log 2 \approx 1.38$, strictly
                $\log 3 < 2\log 2$. Thus, for sufficiently small $\varepsilon$ (specifically
                $\varepsilon \le \theta( \frac{2\log 2}{\log 3}- 1)$), this term
                is strictly less than $V'$.

            \item \textbf{$k=3$}: The mass is $\theta - \varepsilon$.
                \[
                    v_{3} \log 4 = (\theta - \varepsilon)(1 - (\theta - \varepsilon
                    )) 2\log 2 < 2\theta \log 2 = V'.
                \]
        \end{itemize}
        Thus, $V^{*}(p^{(j)}) \le V'$ is satisfied for both hypotheses.

        \paragraph{Separation and KL Divergence.}
        The $\ell_{\infty}$ separation between the hypotheses is determined by the
        first two coordinates:
        \[
            \| p^{(1)}- p^{(2)}\|_{\infty} = |(\theta+\varepsilon) - (\theta-\varepsilon
            )| = 2\varepsilon.
        \]
        The KL divergence is bounded by the $\chi^{2}$ divergence:
        \begin{align*}
            D_{\mathrm{KL}}(p^{(1)}\| p^{(2)}) & \le \sum_{x \in \{1,2,3\}}\frac{(p^{(1)}_{x} - p^{(2)}_{x})^{2}}{p^{(2)}_{x}}                                                                               \\
                                               & = \frac{(2\varepsilon)^{2}}{\theta - \varepsilon}+ \frac{(-2\varepsilon)^{2}}{\theta + \varepsilon}+ 0                                                      \\
                                               & = 4\varepsilon^{2} \left( \frac{1}{\theta-\varepsilon}+ \frac{1}{\theta+\varepsilon}\right) = \frac{8\varepsilon^{2} \theta}{\theta^{2} - \varepsilon^{2}}.
        \end{align*}
        Assuming $\varepsilon \le \theta/2$, we have
        $\theta^{2} - \varepsilon^{2} \ge \frac{3}{4}\theta^{2}$, so
        \[
            D_{\mathrm{KL}}(p^{(1)}\| p^{(2)}) \le \frac{8\varepsilon^{2} \theta}{\frac{3}{4}\theta^{2}}
            = \frac{32\varepsilon^{2}}{3\theta}.
        \]
        For the tensor product of $n$ samples,
        $D_{\mathrm{KL}}((p^{(1)})^{\otimes n}\| (p^{(2)})^{\otimes n}) = n D_{\mathrm{KL}}
        (p^{(1)}\| p^{(2)})$. We set this quantity to a constant, say
        $\frac{1}{2}$:
        \[
            n \frac{32\varepsilon^{2}}{3\theta}\le \frac{1}{2}\implies \varepsilon
            ^{2} \le \frac{3\theta}{64n}\implies \varepsilon = c_{1} \sqrt{\frac{\theta}{n}}
            ,
        \]
        for some constant $c_{1} = \frac{1}{8}\sqrt{\frac{3}{2}}$. Note that for
        large $n$, $\varepsilon \ll \theta$, satisfying our earlier small
        $\varepsilon$ assumptions.

        \paragraph{Lower Bound.}
        Applying Le Cam's inequality:
        \begin{multline*}
            \inf_{\tilde{p}}\sup_{j \in \{1,2\}}\mathbb{E}_{p^{(j)}}\| \tilde{p}-
            p^{(j)}\|_{\infty} \\
            \ge \frac{\|p^{(1)}- p^{(2)}\|_{\infty}}{4}\left( 1 - \sqrt{\frac{1}{2}D_{\mathrm{KL}}((p^{(1)})^{\otimes
            n}\| (p^{(2)})^{\otimes n})}\right) \\
            \ge \frac{2\varepsilon}{4}\left( 1 - \frac{1}{2}\right) = \frac{\varepsilon}{4}
            .
        \end{multline*}
        Substituting
        $\varepsilon = c_{1} \sqrt{\frac{\theta}{n}}= c_{1} \sqrt{\frac{V'}{2n\log
        2}}$:
        \[
            \mathbb{E}_{p} \| \tilde{p}- p \|_{\infty} \ge \frac{c_{1}}{4}\sqrt{\frac{V'}{2n\log
            2}}= c \sqrt{\frac{V'}{n}},
        \]
        where $c = \frac{c_{1}}{4\sqrt{2\log 2}}$ is a universal constant. This
        confirms the lower bound rate of $\sqrt{\frac{V'}{n}}$.
    \end{proof}

    \subsection{Proof of Lemma \ref{lem:M-behavior}}
    \begin{proof}
        \textbf{Lower bound:} Assume for contradiction that $M(p) < \tfrac13$. Then
        for every $j$,
        \begin{align*}
            \frac{\log(j+1)}{\log\!\left(2+\frac{\log(j+1)}{p_{j}n}\right)}    & < \frac{1}{3} \\
            \Longrightarrow\quad \log\!\left(2+\frac{\log(j+1)}{p_{j}n}\right) & > 3\log(j+1).
        \end{align*}
        Exponentiating gives
        \begin{align*}
            2+\frac{\log(j+1)}{p_{j}n} & > (j+1)^{3}                                 \\
            \Longrightarrow\quad p_{j} & < \frac{\log(j+1)}{n\big((j+1)^{3}-2\big)}.
        \end{align*}
        Summing over $j$ yields
        \[
            1=\sum_{j\ge 1}p_{j}\;<\; \frac{1}{n}\sum_{j\ge 1}\frac{\log(j+1)}{(j+1)^{3}-2}
            .
        \]
        Let $k=j+1\ge2$. Then the sum is
        \begin{align*}
            \sum_{k\ge2}\frac{\log k}{k^{3}-2} & \;\le\; \frac{4}{3}\sum_{k\ge2}\frac{\log k}{k^{3}}                                            \\
                                               & \;\le\; \frac{4}{3}\left(\frac{\log 2}{2^{3}}+\int_{2}^{\infty}\frac{\log x}{x^{3}}\,dx\right) \\
                                               & \;=\; \frac{4}{3}\left(\frac{\log2}{8}+\frac{\log2}{8}+\frac{1}{16}\right) \;<\; 1.
        \end{align*}
        Therefore $\sum_{j}p_{j}< \frac{1}{n}\cdot 1 \le 1$, a contradiction. Hence
        $M(p)\ge \tfrac13$.

        \textbf{Upper bound:} Look at vector $p$ such that $p_{i}=\frac{1}{i}$ for
        all $i\in \mathbb{N}$. The functional $M(p)$ is defined for this vector,
        in spite of it not being a distribution.

        We will first prove that for all $n \ge2$, $M(p) < 5\log n$.

        Assume for contradiction that there exists $n'\ge2$ such that for that $n$
        \begin{equation}
            \label{eq:bound_M_negative_assumption}M(p) > 5 \log n'.
        \end{equation}

        Note that for all $n\ge1$
        \[
            \lim_{j \to \infty}\frac{\log(j+1)}{\log\!\left( 2 + j\frac{\log(j+1)}{n}
            \right)}\le \lim_{j \to \infty}\frac{\log(j+1)}{\log\!\left( 1 + j \right)}
            =1.
        \]
        Thus we can see that for all $n\ge1$ the supremum in $M(p)$ is achieved.

        For all $n\ge1$ we can denote $f(n)$ as the $j$ value that is achieved by
        the supremum for that $n$. So for a given $n$
        \[
            M(p) =\frac{\log(f(n)+1)}{\log\!\left( 2 + \frac{f(n)\log(f(n)+1)}{n}
            \right)}.
        \]
        We show that for the previously mentioned $n'$, $f(n') > (n')^{2}$. Suppose,
        for contradiction, that $f(n') < (n')^{2}$. Then
        \begin{align*}
            M(p)
            &\le \frac{\log((n')^{2}+1)}{\log\!\left( 2 + \frac{f(n')\log(f(n')+1)}{n'} \right)}
            \le \frac{2\log(2n')}{\log(2)} \\
            &= 2\frac{\log(n')}{\log(2)}+1 \le 5\log(n').
        \end{align*}
        The last inequality uses $n' \ge 2$, giving a contradiction to
        \ref{eq:bound_M_negative_assumption}.
        Thus, $f(n') > (n')^{2}$. We can also see,
        \begin{align*}
            M(p) & \le \frac{\log(f(n')+1)}{\log(\frac{f(n')}{n'})}\le \frac{\frac{3}{2}\log(f(n'))}{\log(\frac{f(n')}{n'})} \\
                 & = \frac{3}{2}\left(1+\frac{\log n'}{\log(f(n')) -\log(n')}\right)                                         \\
                 & \le \frac{3}{2}\left(1+\frac{\log n'}{2\log(n') -\log(n')}\right) = 3.
        \end{align*}
        The second transition uses $f(n') > 4$. Next, $5\log x > 3$ for
        $x \in [2, \infty)$, again contradicting
        \ref{eq:bound_M_negative_assumption}.

        To complete the proof, we show that for all $p' \in \mathcal{P}(\mathbb{N})$
        and all $n \ge 2$, $M(p) \ge M(p')$.
        Fix $p'\in \mathcal{P}{}(\mathbb{N})$ and $n\ge2$. Without the loss of
        generality we treat $p$ as a sorted distribution. $\sum_{j} p'_{j} =1$ and
        because it is a sorted distribution, for all $j \in \mathbb{N}$ we have
        $p'_{j}\le \frac{1}{j}=p_{j}$.
        In the case that the sup is achieved in $M(p')$ we denote $f(n)$ to be the
        function that gives us the $j$ that is chosen by the sup in $M(p')$.
        Thus,
        \begin{align*}
            M(p') & =\frac{\log(f(n)+1)}{\log\!\left( 2 + \frac{\log(f(n)+1)}{n p'_{f(n)}} \right)}         \\
                  & \le \frac{\log(f(n)+1)}{\log\!\left( 2 + \frac{\log(f(n)+1)}{n p_{f(n)}} \right)}       \\
                  & \le \sup_{j}\frac{\log(j+1)}{\log\!\left( 2 + \frac{\log(j+1)}{n p_{j}} \right)}= M(p).
        \end{align*}
        In the case that the sup not achieved in $M(p')$, using what we showed
        before,
        \begin{align}
            M(p') & = \lim_{j\to\infty}\frac{\log(j+1)}{\log\!\left( 2 + \frac{\log(j+1)}{p'_{j} n}\right)} \\
                  & \le \lim_{j\to\infty}\frac{\log(j+1)}{\log\!\left( 2 + j\frac{\log(j+1)}{n}\right)}     \\
                  & \le 1 \le 5\log n.
        \end{align}

        \begin{remark}
            One can sharpen the constant: the best universal lower bound
            $\inf_{n\ge1}\inf_{p}M(p)$ is $>0$ and is achieved at $n=1$; numerically
            it is about $0.48$. For $n$ large, $\inf_{p}M(p)$ increases and approaches
            $1$ from below.
        \end{remark}

        Now we show that the upper and lower bounds are achieved for some distributions.
        We look at $p_{1}$ such that $p_{i}= c\frac{1}{i^{2}}$ for all $i \ge 1$
        and $c = \frac{6}{\pi^{2}}$.

        Fix a sample size $n\ge 9$.\\
        We look at $i = \sqrt{\frac{n}{\log n}}$
        \begin{align*}
            M(p) & \ge \frac{\log(\sqrt{n/\log n} +1)}{\log\!\left( 2 + \frac{n}{\log n}\frac{\log(\sqrt{n/\log n}+1)}{c n} \right)} \\
                 & \ge \frac{\log(\sqrt{n/\log n})}{\log\!\left( 2 + \frac{\log(\sqrt{n/\log n}+1)}{c \log n} \right)}
        \end{align*}
        Furthermore because it is trivial to prove that
        $\frac{\log(\sqrt{x/\log x}+1)}{\log x}\le 1$ and
        $\frac{1}{2}\log x \le \log x - \log \log x$ for all $x\ge \mathe^{2}$:
        \begin{align*}
            \frac{\log(\sqrt{n/\log n})}{\log\!\left( 2 + \frac{\log(\sqrt{n/\log n}+1)}{c \log n} \right)}
            &\ge \frac{1}{2}\frac{\log(n) - \log\log n}{\log\!\left( 2 + \frac{1}{c} \right)} \\
            &\ge \frac{\mathe^{2}}{4\log(2+1/c)}\log(n).
        \end{align*}
        Now look at $p_{2}= (\frac{1}{2},\frac{1}{2},0 \dots)$
        \[
            M(p) = \frac{\log(3)}{\log(2 + \frac{2\log(3)}{n})}\le \frac{\log(3)}{\log(2)
            }.
        \]
    \end{proof}

    \subsection{Proof of Lemma \ref{lem:S-behavior}}
    \begin{proof}
        \textbf{Upper bound.} Recall that
        $S(p) = \sup_{j\ge 1}p_{j}^{\downarrow}\log(j+1)$. Since $p^{\downarrow}$
        is non-increasing and sums to 1, we have
        \[
            1 = \sum_{k=1}^{\infty}p_{k}^{\downarrow}\ge \sum_{k=1}^{j}p_{k}^{\downarrow}
            \ge j p_{j}^{\downarrow}\implies p_{j}^{\downarrow}\le \frac{1}{j}.
        \]
        Thus,
        \[
            p_{j}^{\downarrow}\log(j+1) \le \frac{\log(j+1)}{j}.
        \]
        The function $f(x) = \frac{\log(x+1)}{x}$ for $x \ge 1$ is maximized at $x
        \approx 2.16$ (since $f'(x) = \frac{x/(x+1) - \log(x+1)}{x^{2}}$, setting
        numerator to 0). For integer $j$, $f(1) = \log 2 \approx 0.69$, $f(2) = (
        \log 3)/2 \approx 0.55$. So $\sup_{j\ge 1}\frac{\log(j+1)}{j}\le \log 2$.
        Hence $S(p) \lesssim 1$.

        \textbf{Lower bound (unbounded near 0).} Consider the uniform
        distribution on $n$ elements, $u_{n}= (1/n, \dots, 1/n, 0, \dots)$. Then
        \[
            S(u_{n}) = \sup_{1\le j\le n}\frac{1}{n}\log(j+1) = \frac{\log(n+1)}{n}
            .
        \]
        As $n \to \infty$, $S(u_{n}) \to 0$. Thus $S(p)$ can be arbitrarily
        small.
    \end{proof}

    \subsection{Proof of Theorem \ref{thm:lgc-bound}}
    \begin{proof}
        The proof follows by refining the instance-dependent framework established
        in \citet{blanchard2023tight}, specifically their treatment of the local
        Glivenko-Cantelli (LGC) functionals. For a distribution
        $p \in \mathcal{P}(\mathbb{N})$, assume without loss of generality that $p
        = p^{\downarrow}$.

        Let $X^{n}$ be a sample of $n$ i.i.d.\ observations from $p$, and let $\hat
        p$ be the empirical distribution. By permutation invariance of the sup-norm,
        we assume without loss of generality that $p = p^{\downarrow}$.

        Define the one-sided maximum deviations $\Delta_{n}^{+}\eqdef \sup_{i
        \ge 1}(\hat p_{i}- p_{i})$ and $\Delta_{n}^{-}\eqdef \sup_{i \ge 1}(p_{i}
        - \hat p_{i})$.

        \paragraph{Coordinate-wise deviation level.}
        Following \citet{blanchard2023tight}, define the base deviation level
        for coordinate $i$ as
        \[
            \epsilon_{i}\eqdef \inf \left\{ \epsilon \ge 0 : \mathbb{P}\left(\hat
            p_{i}\ge p_{i}+ \epsilon\right) \le \frac{c_{0}}{2i}\right\},
        \]
        where $c_{0}\in (0, 1/4)$ is the universal constant from the anti-concentration
        bound in \lemref{lem:zhang-zhou}. Let
        $\epsilon \eqdef \sup_{i \ge 1}\epsilon_{i}$. By definition, for every
        index $i$, the event $\hat p_{i}- p_{i}\ge \epsilon$ occurs with probability
        at most $c_{0}/(2i)$.

        \paragraph{High-probability tail control.}
        We invoke the anti-concentration bound for $\beta =2$ (\lemref{lem:zhang-zhou})
        which states that for $\hat p_{i}\sim n^{-1}\mathrm{Bin}(n, p_{i})$,
        there exists $C' \ge 1$ such that
        \[
            \mathbb{P}(\hat p_{i}\ge p_{i}+ \epsilon) \ge c_{0}e^{-C' n \kl{p_i + \epsilon}{p_i}}
            .
        \]
        Combining this with the definition of $\epsilon_{i}$, we have for the adjusted
        term $\tilde{\epsilon}\eqdef (\epsilon \wedge 1/4) \vee (1/n)$:
        \[
            \mathbb{P}(\hat p_{i}\ge p_{i}+ \epsilon) \ge\mathbb{P}(\hat p_{i}\ge
            p_{i}+ \tilde\epsilon) \ge c_{0}e^{-C' n \kl{p_i + \tilde\epsilon}{p_i}}
            .
        \]
        \[
            \kl{p_i + \tilde{\epsilon}}{p_i}\ge \frac{\log(2i)}{n C'}.
        \]
        By the convexity of the Kullback-Leibler divergence
        $q \mapsto \kl{q}{p}$, for any scaling factor $k \ge 1$, it holds that
        $\kl{p_i + k C' \tilde{\epsilon}}{p_i}\ge k C' \kl{p_i + \tilde{\epsilon}}
        {p_i}\ge k \frac{\log(2i)}{n}$. Applying the Chernoff bound (\lemref{lem:chernoff})
        yields
        \begin{equation}
            \label{eq:tail-prob-i}\mathbb{P}(\hat p_{i}- p_{i}\ge k C' \tilde{\epsilon}
            ) \le e^{-n \kl{p_i + k C' \tilde{\epsilon}}{p_i}}\le \frac{1}{(2i)^{k}}
            .
        \end{equation}

        \paragraph{Union bound and summability.}
        Fix $\delta \in (0, e^{-2}]$ and let
        $k \supr{\delta}= \frac{\log(2/\delta)}{\log 2}$. Note that $\delta \le e
        ^{-2}$ implies $k \supr{\delta}\ge 2$. Applying the union bound over all
        coordinates $i \ge 1$:
        \begin{align*}
            \mathbb{P}(\Delta_{n}^{+}\ge k \supr{\delta}C' \tilde{\epsilon}) & \le \sum_{i=1}^{\infty}\mathbb{P}(\hat p_{i}- p_{i}\ge k \supr{\delta}C' \tilde{\epsilon})                                                                            \\
                                                                             & \maavar{\ref{eq:tail-prob-i}}{\le}\sum_{i=1}^{\infty}\frac{1}{(2i)^{k \supr{\delta}}}= \frac{1}{2^{k \supr{\delta}}}\sum_{i=1}^{\infty}\frac{1}{i^{k \supr{\delta}}}.
        \end{align*}
        For $k \supr{\delta}\ge 2$, the sum is bounded by
        $1 + \int_{1}^{\infty}x^{-2}dx = 2$. Thus,
        \[
            \mathbb{P}(\Delta_{n}^{+}\ge k \supr{\delta}C' \tilde{\epsilon}) \le
            \frac{2}{2^{k \supr{\delta}}}= \frac{2}{2^{\log(2/\delta)/\log 2}}= \delta
            .
        \]
        This establishes that $\Delta_{n}^{+}\le C \epsilon \log(1/\delta)$ with
        probability at least $1-\delta$.

            We first note a simple symmetry that allows us to restrict attention to the case
            $p_i \le 1/2$. For each coordinate,
            \[
            |\hat p_i-p_i| \stackrel{d}{=}
            \left|n^{-1}\mathrm{Bin}(n,p_i)-p_i\right|,
            \]
            and this distribution is unchanged when $p_i$ is replaced by $1-p_i$. Hence the coordinatewise tail probabilities appearing
            in the union bound are unchanged if, in that step, we replace $p$ by the vector
            $p'$ defined by
            \[
            p'_i=
            \begin{cases}
            1-p_i, & p_i>1/2,\\
            p_i, & p_i\le 1/2.
            \end{cases}
            \]
            Moreover, $p'_i\le p_i$ for every $i$, and therefore also
            $(p')^\downarrow_i\le p^\downarrow_i$ for all $i$. Consequently,
            \[
            S(p')\le S(p), \qquad M(p')\le M(p).
            \]
            Thus, proving the desired union-bound estimate under the assumption
            $p_i\in[0,1/2]$ for all $i$ also proves it for the original vector $p$.
            
        \paragraph{Symmetry and left-tail behavior.}
        Since $p_{i}\le 1/2$, the binomial distribution $\mathrm{Bin}(n, p_{i})$
        has a heavier right tail than left tail. Formally, for any
        $\epsilon > 0$, $\kl{p_i - \epsilon}{p_i}\ge \kl{p_i + \epsilon}{p_i}$ \citep{garivier2011kl}.
        Consequently, the same union bound argument applies to $\Delta_{n}^{-}$,
        yielding
        $\mathbb{P}(\Delta_{n}^{-}\ge C \epsilon \log(1/\delta)) \le \delta$. By
        the union bound over the left and right events,
        $\|\hat p - p\|_{\infty}\le C \epsilon \log(1/\delta)$ with probability
        at least $1-2\delta$.

        \paragraph{Characterization of the deviation level.}
        It remains to relate $\epsilon = \sup_{i}\epsilon_{i}$ to the functionals
        $S(p)$ and $M(p)$. By Proposition 13 in \citet{blanchard2023tight}, the
        deviation level satisfies $\epsilon_{i}\le C_{1}\phi_{i,p_i}(n)$. From the
        definition of $\phi$, the supremum over $i$ is bounded by the sub-Gaussian
        and sub-gamma terms, yielding
        \[
            \epsilon \;\lesssim\; \left(\sqrt{\frac{S(p)}{n}}\vee \frac{M(p)}{n}\right
            ) \wedge 1.
        \]
        Recall that $\tilde{\epsilon}= (\epsilon \wedge 1/4) \vee (1/n)$. By
        \lemref{lem:M-behavior}, $M(p) \gtrsim 1$, which implies $\epsilon \gtrsim
        1/n$. Since we also have $\epsilon \lesssim 1$ in the range of interest,
        it follows that $\tilde{\epsilon}\approx \epsilon$, establishing the
        final bound.
    \end{proof}

    \subsubsection{Proof of Lemma \ref{lem:M-dominates-S}}
    \begin{lemma}[M-term can dominate S-term]
        \label{lem:M-dominates-S} There exist distributions $p$ for which $M(p)$
        strictly dominates $\sqrt{S(p)}$ for large $n$.
    \end{lemma}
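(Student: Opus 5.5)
The statement compares the raw quantities $M(p)$ and $\sqrt{S(p)}$ for a \emph{fixed} distribution $p$, as $n$ grows. The plan is to exhibit one fixed $p$ for which $M(p)\to\infty$ while $\sqrt{S(p)}$ stays bounded by a constant independent of $n$.

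\textbf{Choice of distribution.} Take $p_1=(p_i)_{i\ge1}$ with $p_i=\tfrac{6}{\pi^2}i^{-2}$, the polynomial-tail example already used in the proof of Lemma~\ref{lem:M-behavior}. This $p$ is sorted and does not depend on $n$.

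\textbf{Step 1: $S(p)$ is a bounded constant.} By Lemma~\ref{lem:S-behavior}, $S(p)<c$ for every distribution, so $\sqrt{S(p)}\le\sqrt{c}$. For this particular $p$ one can be explicit: since $\log(j+1)/j^2$ is decreasing, the supremum is attained at $j=1$, giving
\[
S(p)=\tfrac{6}{\pi^2}\log 2 .
\]
Either way, $\sqrt{S(p)}$ is a fixed positive constant with no dependence on $n$.

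\textbf{Step 2: $M(p)$ grows like $\log n$.} Here I reuse the computation from the proof of Lemma~\ref{lem:M-behavior}.
\begin{itemize}
\item For $n\ge 9$, evaluate the supremum defining $M(p)$ at the single index $j\approx\sqrt{n/\log n}$, taking an integer part. This index is chosen so that $n p_j\approx \tfrac{6}{\pi^2}\log n$.
\item With this choice, $\log(j+1)/(np_j)$ is bounded by the constant $\pi^2/6$.
\item Hence the denominator is at most $\log(2+\pi^2/6)$, while the numerator is at least $\tfrac12(\log n-\log\log n)\ge\tfrac14\log n$ for $n\ge e^2$.
\end{itemize}
Together these give $M(p)\ge C_2\log n$ with a universal $C_2>0$.

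\textbf{Step 3: conclude.} The ratio satisfies
\[
\frac{M(p)}{\sqrt{S(p)}}\;\ge\;\frac{C_2}{\sqrt{c}}\,\log n\;\longrightarrow\;\infty .
\]
Thus $M(p)>\sqrt{S(p)}$ strictly, and indeed by an unbounded factor, for all sufficiently large $n$. Any $n\ge n_1$ with $C_2\log n_1>\sqrt{c}$ works.

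\textbf{Main obstacle.} The only delicate point is the integer rounding of $j=\sqrt{n/\log n}$ in Step 2. Replacing $j$ by $\lfloor j\rfloor\ge j/2$ changes $np_j$ by at most a factor of $4$ and $\log(j+1)$ by an additive constant. These perturbations only alter the constants, so the $\log n$ lower bound survives.
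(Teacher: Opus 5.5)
Your Steps 1--2 are computed correctly, but they prove a comparison the lemma is not about. The lemma (``M-term can dominate S-term'') refers to the two terms of Theorem~\ref{thm:lgc-bound}. The paper's proof compares $M(p)/n$ with $\sqrt{S(p)/n}$, that is, $M(p)$ with $\sqrt{n\,S(p)}$, not with $\sqrt{S(p)}$.

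Read literally, as you read it, the statement is nearly empty. The proof of Lemma~\ref{lem:M-behavior} gives $M(p)\ge 1/3$ for every $p$ and every $n$. So any $p$ with $S(p)<1/9$ satisfies it for all $n$, for example the uniform distribution on $40$ symbols, where $\sqrt{S(p)}\approx 0.30$. No growth in $n$ is needed.

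For the intended comparison, your example goes the wrong way. For $p_i=\frac{6}{\pi^2}i^{-2}$ you have $\sqrt{S(p)/n}=\sqrt{6\log 2/(\pi^2 n)}\asymp n^{-1/2}$. The upper bound $M(p)<C_1\log n$ of Lemma~\ref{lem:M-behavior} gives $M(p)/n\lesssim \log n/n=o(n^{-1/2})$. So here it is the $S$-term that dominates. The same argument shows that no $n$-independent distribution can work: $S(p)\ge p^{\downarrow}_1\log 2>0$ is then a fixed positive constant, while $M(p)/n\lesssim\log n/n$.

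The missing idea is to let $p$ depend on $n$ and spread its mass so that $S(p)$ shrinks. The paper takes $p$ uniform on $\{1,\dots,n\}$. Then $np_j=1$ for $j\le n$, so each such term in $M(p)$ equals $\log(j+1)/\log(2+\log(j+1))$. This is increasing in $j$, and the terms with $p_j=0$ vanish, so $M(p)=\log(n+1)/\log(2+\log(n+1))$. Meanwhile $S(p)=\log(n+1)/n$. Hence
\[
\frac{M(p)/n}{\sqrt{S(p)/n}}=\frac{\sqrt{\log(n+1)}}{\log\bigl(2+\log(n+1)\bigr)}\longrightarrow\infty ,
\]
which is the genuinely tail-dominated regime. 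Your polynomial-tail example does show that $M(p)$ can be of order $\log n$, but it sits in the variance-dominated regime of Theorem~\ref{thm:lgc-bound}.
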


    \begin{proof}[Proof of Lemma \ref{lem:M-dominates-S}]
        Fix a sample size $n\ge1$. Consider the uniform distribution $p$ such
        that $p_{j} = \frac{1}{n}$ for $j \le n$.
        \[
            \frac{M(p)}{n}
            = \frac{\log(n+1)}{n}\left[\log\bigl(2 + \log(n+1)\bigr)\right]^{-1}.
        \]
        The index $n$ is chosen because $\frac{\log x}{\log(2+ \log x)}$ is an increasing
        function. Meanwhile, the $S$-term is
        \[
            \sqrt{\frac{S(p)}{n}}= \sqrt{\frac{\log n}{n^{2}}}.
        \]
        Since $M(p)/n$ is larger than $\sqrt{S(p)/n}$ for large $n$, with
        $M(p) > \sqrt{S(p)}$ (in term comparison) at $n = 1$, the result follows.
    \end{proof}








    \section*{Impact Statement}

    This paper presents work whose goal is to advance the field of Machine Learning.
    There are many potential societal consequences of our work, none which we feel
    must be specifically highlighted here.

    \bibliography{master,additional_refs}
    \bibliographystyle{plainnat}

    \newpage
    \appendix

    \section{Additional Results \& Auxiliary Lemmas}
    \begin{lemma}[Negative covariance of multinomial components; \citet{Dubhashi:1998:BBS:299633.299634}]
        \label{lem:negative_covariance_multinomial} Let $\{X^{(j)}\}_{j=1}^{n}$ be
        $n$ i.i.d.\ samples from a multinomial distribution with parameter
        vector $\mathbf{p}=(p_{1},p_{2},\dots)$ and single trial (i.e., $X^{(j)}\sim
        \mathrm{Multinomial}(1, \mathbf{p})$). Let $X_{i,j}$ denote the $i$-th component
        of the vector $X^{(j)}$. Define
        \[
            Z_{i}= \frac{1}{n}\sum_{j=1}^{n}X_{i,j}- p_{i}, \qquad Z'_{i}= p_{i}-
            \frac{1}{n}\sum_{j=1}^{n}X_{i,j}.
        \]
        Then for all distinct $i \neq k$ with $p_{i}, p_{k} > 0$,
        \begin{equation}
            \label{eq:neg_cov}\operatorname{Cov}(Z_{i},Z_{k}) = -\frac{p_{i} p_{k}}{n}
            < 0 \quad\text{and}\quad \operatorname{Cov}(Z'_{i},Z'_{k}) < 0.
        \end{equation}
        Moreover, the variables $\{Z_{i}\}_{i}$ are Negatively Associated in the
        sense of \citet{Dubhashi:1998:BBS:299633.299634}.
    \end{lemma}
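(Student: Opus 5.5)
The statement is the classical negative-dependence fact for a multinomial vector. I would prove it in two parts: first the exact covariance formula, then negative association, which implies the sign of the covariances anyway.

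\textbf{Covariance.} Centering and the factor $1/n$ only rescale covariances, so
\[
\operatorname{Cov}(Z_i,Z_k)=\frac{1}{n^{2}}\sum_{j,j'}\operatorname{Cov}(X_{i,j},X_{k,j'}).
\]
Since the draws are independent, every term with $j\neq j'$ vanishes. For $j=j'$ we use that a single trial has exactly one nonzero coordinate. Hence $X_{i,j}X_{k,j}=0$ for $i\ne k$, and
\[
\operatorname{Cov}(X_{i,j},X_{k,j})=0-p_ip_k .
\]
Summing the $n$ diagonal terms gives $-p_ip_k/n$, which is strictly negative when $p_i,p_k>0$. For the primed variables, $Z'_i=-Z_i$, so $\operatorname{Cov}(Z'_i,Z'_k)=\operatorname{Cov}(Z_i,Z_k)$ and the same sign follows.

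\textbf{Negative association.} I would follow Dubhashi--Ranjan in three steps.
\begin{enumerate}
\item For a single trial $j$, the indicator vector $(X_{i,j})_i$ is a ``zero-one'' vector with exactly one coordinate equal to one. Such a vector is negatively associated by the zero-one lemma. To show this, take disjoint index sets $A,B$ and nondecreasing $f,g$. At most one of $f(X_A)$ and $g(X_B)$ can move off its value at the all-zero configuration, and a short computation then gives $\E[fg]\le \E f\,\E g$.
\item Independent families of NA vectors are jointly NA. So the whole array $(X_{i,j})_{i,j}$ is NA.
\item Nondecreasing functions of disjoint subsets of NA variables are again NA. Each count $c_i=\sum_j X_{i,j}$ is a nondecreasing function of the disjoint block $\{X_{i,j}\}_j$, so the counts $(c_i)_i$ are NA. The map $c_i\mapsto c_i/n-p_i$ is increasing, so the family $\{Z_i\}$ is NA.
\end{enumerate}

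\textbf{Countable alphabet.} NA is defined through finite subfamilies, so it suffices to restrict to finitely many coordinates. If needed, the remaining coordinates can be lumped into one extra category, which keeps a one-trial zero-one vector.

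\textbf{Main obstacle.} The only genuinely nontrivial step is the zero-one lemma in step 1. Once that is in place, the rest is the two closure properties in steps 2 and 3. For all of this I would simply cite \citet{Dubhashi:1998:BBS:299633.299634}, as the lemma's attribution indicates.
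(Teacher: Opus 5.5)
Your proposal is correct and takes essentially the same approach as the paper. The paper gives no separate argument and simply attributes the lemma to \citet{Dubhashi:1998:BBS:299633.299634}; your direct covariance computation, the zero-one lemma, the two closure properties (independent unions and monotone functions of disjoint blocks), and the finite-subfamily reduction for the countable alphabet are exactly the content of that reference.
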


        \subsection{Proof of Theorem \ref{thm:empirical-Vstar}}
    This lemma establishes a continuity-like property for V*, which is crucial for
    relating the population functional to its empirical version in the high-probability
    bound.
    \subsubsection{Proof of Lemma \ref{lem:vstar-variation}}
    \begin{lemma}[Distribution-dependent bound on $V^{*}$]
        \label{lem:vstar-variation} For two distributions $p,q$, we have
        \[
            \Delta(p,q)\le \bar\delta(p,q)\log\frac{1}{\bar\delta(p,q)},
        \]
        where $\Delta(p,q):=\abs{V^*(p)-V^*(q)}$ and
        $\bar\delta(p,q):=\sup_{i\in\N}\abs{v_i(p^\downarrow)-v_i(q^\downarrow)}$.
        In particular, since $\bar\delta(p,q) \le \|p-q\|_{\infty}$, we have $\Delta
        (p,q)\le \|p-q\|_{\infty} \log(1/\|p-q\|_{\infty})$.
    \end{lemma}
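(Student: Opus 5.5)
The plan is to reduce the difference of suprema to a supremum of coordinatewise differences, and then to control the unbounded weight $\log(i+1)$ using the fact that the sorted masses decay like $1/i$. Write $a_i \coloneqq v_i(p^\downarrow)$ and $b_i \coloneqq v_i(q^\downarrow)$. Since $|\sup_i x_i - \sup_i y_i| \le \sup_i |x_i-y_i|$, we get
\[
  \Delta(p,q) \;\le\; \sup_{i\ge1} |a_i-b_i|\,\log(i+1).
\]
Two bounds on each term are available. The first is $|a_i-b_i|\le \bar\delta \coloneqq \bar\delta(p,q)$. The second comes from the argument in the proof of Lemma~\ref{lem:S-behavior}: $p_i^\downarrow\le 1/i$ and $q_i^\downarrow\le 1/i$, so $0\le a_i,b_i\le 1/i$ and hence $|a_i-b_i|\le 1/i$. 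Therefore
\[
  \Delta(p,q)\le \sup_{i\ge1}\min\!\Big(\bar\delta,\tfrac1i\Big)\log(i+1).
\]

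Next I would split at the threshold $i_0\approx 1/\bar\delta$. For $i\le 1/\bar\delta$ the term is at most $\bar\delta\log(1/\bar\delta+1)$. For $i>1/\bar\delta$ the term is at most $\log(i+1)/i$. This function is decreasing for $i\ge 3$, as already computed in the proof of Lemma~\ref{lem:S-behavior}, so on this range it is at most its value near $i=1/\bar\delta$, namely about $\bar\delta\log(1/\bar\delta+1)$. Combining the two ranges gives $\Delta\lesssim \bar\delta\log(1/\bar\delta)$ once $\bar\delta$ is small.

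The main obstacle is matching the exact form $\bar\delta\log(1/\bar\delta)$ with constant $1$ rather than $\bar\delta\log(1+1/\bar\delta)$. Note that $\bar\delta\le 1/4$, since each $v_i\in[0,1/4]$. I would try to tighten the second regime using $a_i,b_i\le (1/i)(1-1/i)$-type refinements, or simply absorb the discrepancy into a constant. The latter is harmless in the application in the proof of Theorem~\ref{thm:empirical-Vstar}, where universal constants are adjusted anyway. Failing that, I would state the bound as $\bar\delta\log(e/\bar\delta)$. Small indices $i=1,2$, where the monotonicity of $\log(i+1)/i$ fails, need a direct check; there the weight is $\log(i+1)\le\log 3$, which is fine once $\bar\delta$ is bounded away from $1$.

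Finally, for the ``in particular'' clause, I would show $\bar\delta\le\|p-q\|_\infty$. The map $x\mapsto x(1-x)$ is $1$-Lipschitz on $[0,1]$, so $|a_i-b_i|\le|p_i^\downarrow-q_i^\downarrow|$. Decreasing rearrangement is nonexpansive in sup norm: if $\|p-q\|_\infty\le t$, then for every level $s$ the number of coordinates of $q$ exceeding $s$ is at least the number of coordinates of $p$ exceeding $s+t$. This gives $q_i^\downarrow\ge p_i^\downarrow-t$, and the reverse inequality holds by symmetry. Since $x\log(1/x)$ is increasing on $(0,1/e]$, passing from $\bar\delta$ to the larger $\|p-q\|_\infty$ is valid whenever $\|p-q\|_\infty\le 1/e$, which is the regime of interest. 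Outside this regime the bound is trivial up to constants because $V^*\le\log 2$ by Lemma~\ref{lem:S-behavior}.
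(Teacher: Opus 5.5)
Your argument proves only $\Delta(p,q)\le\bar\delta\log(1+1/\bar\delta)$. You then propose weakening the lemma (absorbing a constant, or switching to $\bar\delta\log(e/\bar\delta)$) rather than proving it as stated with constant $1$, so there is a genuine gap. The loss is built into your envelope $|a_i-b_i|\le 1/i$: your intermediate quantity $\sup_i\min(\bar\delta,1/i)\log(i+1)$ already exceeds $\bar\delta\log(1/\bar\delta)$ when $\bar\delta=1/k$ (take $i=k$).

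The missing step is the refinement you mention but do not carry out. For $i\ge 2$ one has $p_i^\downarrow\le 1/i\le 1/2$, and $x\mapsto x(1-x)$ is increasing on $[0,1/2]$, so $v_i(p^\downarrow)\le (i-1)/i^2\le 1/(i+1)$. For $i=1$, $v_1\le 1/4\le 1/2$. (The paper imports this as Lemma~12 of Kontorovich and Painsky.) Hence $\delta_i=|a_i-b_i|\le\max(a_i,b_i)\le 1/(i+1)$, i.e.\ $\log(i+1)\le\log(1/\delta_i)$ whenever $\delta_i>0$. Since $\delta_i\le\bar\delta\le 1/4<1/e$ and $x\log(1/x)$ is increasing on $(0,1/e]$, every $i$ satisfies
\[
\delta_i\log(i+1)\;\le\;\delta_i\log\frac{1}{\delta_i}\;\le\;\bar\delta\log\frac{1}{\bar\delta}.
\]
This is the paper's proof. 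It is pointwise in $i$ and needs no threshold split; your split also closes once $1/i$ is replaced by $1/(i+1)$, using that $\log x/x$ decreases on $[e,\infty)$. Your concern about $i=1,2$ is moot. The function $\log(1+x)/x$ is decreasing on all of $(0,\infty)$ because $\log(1+x)>x/(1+x)$; the maximum near $2.16$ asserted in the proof of Lemma~\ref{lem:S-behavior} is a slip. In any case your second range has $i>1/\bar\delta\ge 4$.

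On the ``in particular'' clause, you are right, and more careful than the paper, that replacing $\bar\delta$ by $\|p-q\|_\infty$ requires $\|p-q\|_\infty\le 1/e$. Your level-set argument for $\|p^\downarrow-q^\downarrow\|_\infty\le\|p-q\|_\infty$ is also correct. Your fallback, however, is wrong. The right-hand side $t\log(1/t)$ tends to $0$ as $t\to 1$, so $V^*\le\log 2$ does not make the bound trivial up to constants. For $p=(1,0,0,\dots)$ and $q=(0,\tfrac12,\tfrac12,0,\dots)$ one has $\|p-q\|_\infty=1$, so the right-hand side vanishes, while $\Delta(p,q)=\tfrac14\log 3$. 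The clause is false as written. The correct form is $\Delta(p,q)\le\tau\log(1/\tau)$ with $\tau=\min(\|p-q\|_\infty,1/4)$, which follows from $\bar\delta\le\tau$ and the same monotonicity. That version suffices for the application in Theorem~\ref{thm:empirical-Vstar}.
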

    \begin{proof}
        Recall that $V^{*}(p) = \sup_{i \ge 1}v_{i}(p^{\downarrow})\log(i+1)$.
        Using the inequality
        $|\sup_{i} a_{i} - \sup_{i} b_{i}| \le \sup_{i} |a_{i} - b_{i}|$, we have
        \[
            \Delta(p,q) \le \sup_{i \ge 1}\abs{v_i(p^\downarrow) \log(i+1) - v_i(q^\downarrow) \log(i+1)}
            .
        \]
        Let $\delta_{i} = \abs{v_i(p^\downarrow) - v_i(q^\downarrow)}$. From
        \citet[Lemma 12]{KonPai24}, we know that $v_{i}(p^{\downarrow}) \le \frac{1}{i+1}$
        for any distribution $p$. Since $x,y \ge 0 \implies |x-y| \le \max(x,y)$,
        we have
        \[
            \delta_{i} \le \max(v_{i}(p^{\downarrow}), v_{i}(q^{\downarrow})) \le
            \frac{1}{i+1},
        \]
        which implies $\log(i+1) \le \log(1/\delta_{i})$. The function
        $f(x) = x\log(1/x)$ is increasing on $(0, 1/e]$. Since $\delta_{i} \le 1/
        2$, we have
        \[
            \delta_{i} \log(i+1) \le \delta_{i} \log \frac{1}{\delta_{i}}\le \bar
            \delta(p,q) \log \frac{1}{\bar\delta(p,q)}.
        \]
        Taking the supremum over $i$ yields the first claim. For the second,
        note that
        $\abs{v_i(p^\downarrow) - v_i(q^\downarrow)}\le \abs{p^\downarrow_i - q^\downarrow_i}
        \le \|p^{\downarrow} - q^{\downarrow}\|_{\infty} \le \|p-q\|_{\infty}$.
    \end{proof}

        \subsubsection{Proof of Lemma \ref{lem:empirical-concentration}}
    \begin{lemma}[Empirical functional concentration]
        \label{lem:empirical-concentration} With probability $\ge1 - \delta - \frac{81}{n}$,
        we have
        \[
            \delta(p,\hat{p}) \leq a + \frac{3b^{2}}{2}+ b\sqrt{a}+ \frac{3b\sqrt{\hat{v}^*}}{2}
            ,
        \]
        where
        \begin{align*}
            a & = \frac{4}{3n}\log \frac{2(n+1)}{\delta}+ \frac{\log n}{n},      \\
            b & = 2\sqrt{\frac{\log(n+1)}{n}+ \frac{1}{n}\log \frac{2}{\delta}}.
        \end{align*}
    \end{lemma}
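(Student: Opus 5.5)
The plan is to read $\delta(p,\hat p)$ as the sup-norm deviation $\|p-\hat p\|_\infty$, which dominates $\bar\delta(p,\hat p)$ by Lemma~\ref{lem:vstar-variation}. Starting from the partially empirical bound of \citet[Theorem 3]{KonPai24}, I will turn it into a self-bounding inequality in $D\coloneqq\|p-\hat p\|_\infty$ and then solve a quadratic. The whole argument runs on the single event of probability at least $1-\delta-\frac{81}{n}$ on which that theorem holds:
\[
D\le 2\sqrt{\frac{V^*}{n}+\frac{v^*}{n}\log\frac{2}{\delta}}+a ,
\]
with $a$ exactly as in the statement. No further probabilistic input is needed, so the probability claim is inherited directly.

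\textbf{Step 1: remove $V^*$ in favour of $v^*$.}
I split the supremum defining $V^*(p)$ at rank $n$.
\begin{itemize}
\item For $i\le n$, I use $v_i(p^\downarrow)\log(i+1)\le v^*\log(n+1)$.
\item For $i>n$, I use $v_i(p^\downarrow)\le p^\downarrow_i\le 1/i$. Since $x\mapsto\log(x+1)/x$ is decreasing, these terms are at most $\log(n+1)/n\le \log(n+1)$.
\end{itemize}
This gives $V^*\le v^*\log(n+1)+\log(n+1)$ crudely. If a cleaner constant is wanted, I would instead bound the tail terms by $v^*\log(n+1)$ whenever $v^*\ge 1/n$, and absorb the remaining case into $a$. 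Either way,
\[
\frac{V^*}{n}+\frac{v^*}{n}\log\frac{2}{\delta}\lesssim v^*\,\frac{b^2}{4}+(\text{terms absorbed into } a),
\]
and I aim for the clean form $D\le a+b\sqrt{v^*}$.

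\textbf{Step 2: pass from $v^*$ to $\hat v^*$.}
The map $x\mapsto x(1-x)$ is $1$-Lipschitz on $[0,1]$. Together with $|\sup_i a_i-\sup_i b_i|\le\sup_i|a_i-b_i|$, this gives $v^*\le\hat v^*+D$. Hence
\[
D\le a+b\sqrt{\hat v^*+D}\le a+b\sqrt{\hat v^*}+b\sqrt D .
\]

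\textbf{Step 3: solve the quadratic.}
Set $x=\sqrt D$ and $c=a+b\sqrt{\hat v^*}$, so that $x^2\le bx+c$. This yields $x\le \tfrac b2+\sqrt{\tfrac{b^2}{4}+c}$, and squaring gives
\[
D\le \frac{b^2}{2}+c+b\sqrt{\frac{b^2}{4}+c}.
\]
Applying $\sqrt{u+v+w}\le\sqrt u+\sqrt v+\sqrt w$ to the last term gives at most $\tfrac{b^2}{2}+b\sqrt a+b\sqrt{b\sqrt{\hat v^*}}$. Finally, AM--GM in the form $b\sqrt{b\sqrt{\hat v^*}}\le \tfrac{b^2}{2}+\tfrac{b\sqrt{\hat v^*}}{2}$ collects everything into
\[
D\le a+\frac{3b^2}{2}+b\sqrt a+\frac{3b\sqrt{\hat v^*}}{2},
\]
which is exactly the claimed bound.

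\textbf{Main obstacle.}
The delicate point is Step~1: controlling the tail ranks $i>n$ in $V^*$ so that they fit inside $b^2$ without an extra additive term, i.e.\ getting exactly $D\le a+b\sqrt{v^*}$. I would handle it by the case split on $v^*$ versus $1/n$ described there, or simply accept a slightly larger absolute constant, which is harmless for the later use in Theorem~\ref{thm:empirical-Vstar}.
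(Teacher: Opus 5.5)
Your Steps 2 and 3 are the paper's argument. The $1$-Lipschitzness of $x\mapsto x(1-x)$ gives $v^*\le\hat v^*+\|p-\hat p\|_\infty$, and the quadratic plus one AM--GM then produce exactly the stated constants. The paper runs the quadratic on $A=\sup_i\abs{p_i(1-p_i)-\hat p_i(1-\hat p_i)}$ (its $\delta(p,\hat p)$), using $A\le D$. You run it on $D=\|p-\hat p\|_\infty$ directly, which is slightly cleaner. It also covers the paper's reading of $\delta(p,\hat p)$, since $A\le D$, and it yields the bound on $\|p-\hat p\|_\infty$ that Theorem~\ref{thm:empirical-Vstar} actually uses.

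The real difference, and the weak point, is Step 1. The paper never passes through $V^*$. It takes $D\le a+b\sqrt{v^*}$, in the form $a+b\sqrt{\hat v^*}+b\sqrt{\abs{v^*-\hat v^*}}$, straight from the proof of \citet[Theorem 4]{KonPai24}; that bound already carries $v^*\log(n+1)$ in place of $V^*$. Because you use Theorem 3 as a black box, you need $V^*\lesssim v^*\log(n+1)$. This is false even up to constants: for $p$ uniform on $N>n$ symbols, $V^*/(v^*\log(n+1))=\log(N+1)/\log(n+1)$ is unbounded. As written, Step 1 therefore does not close.
\begin{itemize}
\item Your ``crude'' bound adds $2\sqrt{\log(n+1)/n}$ to $D$. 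That is of the order of $b$ and cannot be absorbed into any term of the target, so ``either way'' is wrong.
\item Keeping the $\log(n+1)/n$ you had would still add $2\sqrt{\log(n+1)}/n$, which changes $a$.
\item ``Absorbing the remaining case into $a$'' is not available, because $a$ is fixed by the statement.
\end{itemize}

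The case split does work if you absorb into $b^2$ instead.
\begin{itemize}
\item If $v^*\ge 1/n$, the ranks $i>n$ contribute at most $\log(n+1)/n\le v^*\log(n+1)$. Hence $V^*\le v^*\log(n+1)$, so $D\le a+b\sqrt{v^*}$ exactly, and Steps 2--3 go through.
\item If $v^*<1/n$, then $V^*\le \log(n+1)/n$. Set $L=\log\frac{2(n+1)}{\delta}$, so that $b^2=4L/n$. Theorem 3 then gives $D\le a+\frac{2\sqrt L}{n}=a+\frac{b}{\sqrt n}\le a+b^2$, since $4L\ge 1$. This is already below the claimed right-hand side, with no quadratic needed.
\end{itemize}
With this repair your route is a valid alternative. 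It uses \citet[Theorem 3]{KonPai24} purely as a black box, whereas the paper relies on the internals of a proof there.
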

    \begin{proof}
        We will show that
        \[
            \delta(p,\hat{p}) \le \lVert p - \hat{p}\rVert_{\infty}
            \le a + b\sqrt{\hat{v}^{*}} + b\sqrt{\delta(p,\hat{p})}.
        \]
        and from there apply the proof technique from \citet[Theorem 4]{KonPai24}.
        First, it follows from that proof that
        \begin{align*}
            \lVert p - \hat{p}\rVert_{\infty} & \leq a + b\sqrt{\hat{v}^{*}}+ b\sqrt{\lvert v^{*}- \hat{v}^{*}\rvert} \\
                                              & \leq a + b\sqrt{\hat{v}^{*}}+ b\sqrt{\delta(p,\hat{p})}.
        \end{align*}
        Next, for all distributions $p,q$ we have
        \begin{align*}
            \abs{p_i(1- p_i) -q_i(1-q_i)}
            &=\abs{p_i -q_i +(q_i - p_i)(q_i + p_i)} \\
            &\le \abs{p_i -q_i}.
        \end{align*}
        Going back to $\delta(p,\hat{p})$, where $i$ is the index chosen by the supremum
        (note that $\sup$ and $\max$ are equivalent in this case)
        \begin{align*}
            \delta(p,\hat{p}) & = \sup_{i\in \mathbb{N}}\abs{p_i(1- p_i) -q_i(1-q_i)}      \\
                              & \leq \abs{p_i -q_i}\leq \lVert p - \hat{p}\rVert_{\infty}.
        \end{align*}
        Now we have:
        \[
            A \le B\sqrt{A}+ C,
        \]
        where $A = \delta(p,\hat{p})$, $B = b$, $C = a + b\sqrt{\hat v^{*}}$, which
        implies $A \le B^{2} + B\sqrt{C}+ C$, or
        \[
            \delta(p,\hat{p}) \le b^{2} + a + b\sqrt{\hat v^{*}}+ b\sqrt{a + b\sqrt{\hat
            v^{*}}}.
        \]

        Using $\sqrt{x+y}\le \sqrt{x}+ \sqrt{y}$ and $\sqrt{xy}\le (x+y)/2$,

        \begin{align*}
            \delta(p,\hat{p})
            &\le b^{2} + a + b\sqrt{\hat v^{*}}+ b\sqrt{a}+ b\sqrt{b\sqrt{\hat v^{*}}} \\
            &\le b^{2} + a + b\sqrt{\hat v^{*}}+ b\sqrt{a}+ \frac{b(b + \sqrt{\hat v^{*}})}{2} \\
            &= a + \frac{3b^{2}}{2}+ b\sqrt{a}+ \frac{3b\sqrt{\hat v^*}}{2}.
        \end{align*}

        We still have
        \[
            a + b\sqrt{v^{*}}\le a + \frac{3b^{2}}{2}+ b\sqrt{a}+ \frac{3b\sqrt{\hat
            v^*}}{2}.
        \]
        whence, with probability $1-\delta - \frac{81}{n}$,
        \[
            \|p - \hat p\|_{\infty}\le a + \frac{3b^{2}}{2}+ b\sqrt{a}+ \frac{3b\sqrt{\hat
            v^*}}{2}.
        \]
    \end{proof}

        \begin{lemma}[Global Scaling of Bernoulli KL Divergence]
        \label{lem:kl-scaling-global} Fix $p \in (0, 1/2]$. For any $x > 0$ and
        $y \ge 1$ satisfying $p+xy \le 1/2$, the following inequality holds:
        \[
            \kl{p+xy}{p}\le 8 y^{2} \kl{p+x}{p}.
        \]
    \end{lemma}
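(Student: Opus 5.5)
We need to prove $\kl{p+xy}{p}\le 8y^2\kl{p+x}{p}$ for $p\in(0,1/2]$, $x>0$, $y\ge1$, $p+xy\le 1/2$.

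The plan is to sandwich the Bernoulli divergence between two quadratic forms with a common curvature proxy, and compare them. Write $g(t)\eqdef \kl{p+t}{p}$ for $t\in[0,1/2-p]$. Then $g(0)=g'(0)=0$ and
\[
g''(t)=\frac{1}{(p+t)(1-p-t)} .
\]
Since $p+t\le 1/2$, we have $1-p-t\in[1/2,1]$, so
\[
\frac{1}{p+t}\le g''(t)\le \frac{2}{p+t}.
\]
Integrating twice gives $g(t)=\int_0^t (t-s)g''(s)\,ds$. This reduces everything to the elementary function
\[
h(t)\eqdef\int_0^t\frac{t-s}{p+s}\,ds=(p+t)\log\frac{p+t}{p}-t,
\]
with $h(t)\le g(t)\le 2h(t)$. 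So it suffices to show $h(xy)\le 4y^2 h(x)$. That bound combines with the factor $2$ from each side to give $8y^2$ (indeed $g(xy)\le 2h(xy)\le 8y^2h(x)\le 8y^2 g(x)$).

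For $h(xy)\le 4y^2h(x)$, note that $h$ is the relative-entropy function of a Poisson-type quantity, $h(t)=p\,\phi(t/p)$ with $\phi(u)=(1+u)\log(1+u)-u$. The claim is therefore $\phi(yu)\le 4y^2\phi(u)$ for all $u>0$, $y\ge1$. In fact the stronger $\phi(yu)\le y^2\phi(u)$ should hold. This follows from the standard fact that $u\mapsto \phi(u)/u^2$ is nonincreasing on $(0,\infty)$. To see that, write
\[
\phi(u)/u^2=\int_0^1\frac{1-r}{1+ru}\,dr,
\]
using $\phi''(v)=1/(1+v)$. The integrand is decreasing in $u$, so $\phi(yu)/(yu)^2\le \phi(u)/u^2$, i.e. $\phi(yu)\le y^2\phi(u)$.

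There is an alternative route that skips $h$ entirely. Apply the integral representation directly to $g$:
\[
g(t)/t^2=\int_0^1(1-r)\,g''(rt)\,dr .
\]
Comparing $g''(ryx)$ with $g''(rx)$ leaves only the factor $(1-p-rx)/(1-p-rxy)\le 2$, so $g(xy)\le 2y^2 g(x)$ outright. The first route is a fallback that explains the constant $8$ in the statement. I would present the cleaner direct argument and note that it yields the stated bound with room to spare.

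The main obstacle is the ratio $g''(rxy)/g''(rx)$. It splits into two factors: $(p+rx)/(p+rxy)\le1$ is immediate, while the other factor needs the constraint $p+xy\le 1/2$ so that $1-p-rxy\ge 1/2$. That constraint is exactly where the restriction to the range $[0,1/2]$ (as used in the proof of Theorem~\ref{thm:lgc-bound}) enters. I would verify the remaining edge cases, $p+x$ at the boundary and the limit $x\to0$, by continuity.
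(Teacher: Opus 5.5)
Your proof is correct, and it takes a genuinely different route from the paper's. The paper never uses the second derivative. It first proves the global lower bound $\kl{p+t}{p}\ge \frac{t}{2}\log(1+t/p)$, then splits into three cases: $xy\le p$; $xy>p$ with $x>p$; and $x\le p<xy$. The numerator is bounded by the $\chi^2$ bound $\frac{2(xy)^2}{p}$ in the quadratic regime, or by $(p+xy)\log(1+xy/p)$ in the linear--log regime. Each case is then checked to give a ratio at most $8y^2$, using elementary facts such as $\log(1+u)\ge u/2$ on $[0,1]$ and $4y+6y\log y\le 8y^2$.

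Your direct route avoids all of this. The Taylor-remainder identity $g(t)=t^2\int_0^1(1-r)g''(rt)\,dr$, combined with the pointwise comparison $g''(rxy)\le 2g''(rx)$ (valid because $1-p-rxy\ge 1/2$), needs no cases and gives the sharper constant $2y^2$ in one line. It also makes clear that the only role of $p+xy\le 1/2$ is to keep the factor $1/(1-p-t)$ within a factor of $2$.

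What the paper's case analysis buys in exchange is explicit regime-wise bounds, quadratic versus linear--log, which mirror the sub-Gaussian/sub-Poisson dichotomy used in the LGC part. It also carries a small slip in Case 1: the step $\frac{(xy)^2}{p}+\frac{2(xy)^2}{p}\le\frac{2(xy)^2}{p}$ is false as written, although the conclusion holds since $1-p\ge p$. Your argument sidesteps such case-specific arithmetic.

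Two minor remarks on your write-up:
\begin{itemize}
\item Your first route already gives $2y^2$, not $8y^2$, because you proved $\phi(yu)\le y^2\phi(u)$ rather than just $4y^2$. Only one factor of $2$ enters: $g\le 2h$ is used on the left, and $h\le g$ on the right costs nothing.
\item No continuity argument for edge cases is needed. Since $p>0$ and $p+xy\le 1/2$, $g$ is smooth on all of $[0,xy]$.
\end{itemize}
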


    \begin{proof}
        Let $u := x/p$. We seek to bound the ratio of divergences. We first establish
        global bounds for the numerator and denominator.

        \paragraph{Step 1: Global lower bound for the denominator.}
        Let $t > 0$ such that $p+t \le 1/2$. We claim that
        \begin{equation}
            \label{eq:global_lb}\kl{p+t}{p}\ge \frac{t}{2}\log\left(1+\frac{t}{p}
            \right).
        \end{equation}
        \textit{Proof of \eqref{eq:global_lb}:} Using the inequality
        $\log(1-v) \ge -v/(1-v)$ for $v < 1$, we have
        \begin{align*}
            \kl{p+t}{p} & = (p+t)\log(1+t/p)                              \\
                        & \quad + (1-p-t)\log\left(1-\frac{t}{1-p}\right) \\
                        & \ge (p+t)\log(1+t/p) - t.
        \end{align*}
        Let $z = t/p$. The RHS is $p [ (1+z)\log(1+z) - z ]$. Consider the function
        \begin{align*}
            g(z) & = (1+z)\log(1+z) - z - \frac{z}{2}\log(1+z) \\
                 & = \left(1+\frac{z}{2}\right)\log(1+z) - z.
        \end{align*}
        Differentiation yields
        \begin{align*}
            g'(z) & = \frac{1}{2}\log(1+z) + \frac{1+z/2}{1+z}- 1 \\
                  & = \frac{1}{2}\log(1+z) - \frac{z}{2(1+z)}.
        \end{align*}
        Since $\log(1+z) \ge \frac{z}{1+z}$, we have $g'(z) \ge 0$. Since $g(0)=0$,
        $g(z) \ge 0$ for all $z \ge 0$. Thus,
        $\kl{p+t}{p}\ge p \cdot \frac{z}{2}\log(1+z) = \frac{t}{2}\log(1+t/p)$.

        \paragraph{Step 2: Case analysis.}
        We split the analysis based on the magnitude of the shift $xy$ relative to
        $p$.

        \textbf{Case 1: $xy \le p$.} This implies $x \le p$ (since $y \ge 1$). For
        the numerator, we use the quadratic upper bound derived from Taylor
        expansion. Since $p \le 1/2$, we have $1-p \ge 1/2$. Using $\log(1+a) \le
        a$ and $\log(1-b) \le -b$:
        \begin{align*}
            \kl{p+xy}{p} & \le \frac{(xy)^{2}}{p}+ \frac{(xy)^{2}}{1-p}                        \\
                         & \le \frac{(xy)^{2}}{p}+ \frac{2(xy)^{2}}{p}\le \frac{2(xy)^{2}}{p}.
        \end{align*}
        For the denominator, since $x \le p$, we have $u = x/p \le 1$. Using $\log
        (1+u) \ge u/2$ for $u \in [0,1]$ in \eqref{eq:global_lb}:
        \[
            \kl{p+x}{p}\ge \frac{x}{2}\cdot \frac{u}{2}= \frac{x^{2}}{4p}.
        \]
        The ratio is bounded by:
        \[
            \frac{\kl{p+xy}{p}}{\kl{p+x}{p}}\le \frac{2 x^{2} y^{2} / p}{x^{2} /
            4p}= 8y^{2}.
        \]

        \textbf{Case 2: $xy > p$.} In this regime, we use the bound $\kl{p+t}{p}\le
        (p+t)\log(1+t/p)$. Thus,
        \[
            \kl{p+xy}{p}\le (p+xy)\log(1+xy/p).
        \]
        We distinguish two sub-cases based on $x$.

        \textit{Sub-case 2a: $x > p$.} Here, both numerator and denominator are in
        the ``linear-log'' regime. From \eqref{eq:global_lb},
        $\kl{p+x}{p}\ge \frac{x}{2}\log(1+x/p)$. Let $u = x/p > 1$.
        \[
            \frac{\kl{p+xy}{p}}{\kl{p+x}{p}}\le \frac{p(1+uy)\log(1+uy)}{\frac{p
            u}{2}\log(1+u)}.
        \]
        Since $u > 1$ and $y \ge 1$, we have $\frac{1+uy}{u}\le 2y$ and $\frac{\log(1+uy)}{\log(1+u)}
        \le 1 + \frac{\log y}{\log(1+u)}\le 1 + 1.5\log y$. Thus the ratio is at
        most $2(2y)(1+1.5\log y) = 4y + 6y\log y$. Since $y \ge 1$, we verify
        that $4y + 6y\log y \le 8y^{2}$. (At $y=1$, $4 \le 8$; for $y>1$, $y^{2}$
        grows faster than $y\log y$.)

        \textit{Sub-case 2b: $x \le p$.} Here $x$ is small but $xy$ is large
        (transition regime). Since $p<xy$, the numerator satisfies
        $\kl{p+xy}{p}\le (p+xy)\log(1+xy/p) \le 2xy\log(1+xy/p)$.
        The denominator satisfies $\kl{p+x}{p}\ge x^{2}/(4p)$, as derived in
        Case 1.
        \[
            \text{Ratio}\le \frac{2xy \log(1+xy/p)}{x^{2} / 4p}= \frac{8py}{x}\log
            \left(1+\frac{xy}{p}\right).
        \]
        Let $u = x/p \le 1$. Then Ratio $\le \frac{8y}{u}\log(1+yu)$. Since
        $\log(1+z) \le z$ for all $z$,
        \[
            \text{Ratio}\le \frac{8y}{u}(yu) = 8y^{2}.
        \]

        \paragraph{Conclusion.}
        In all cases, the ratio is bounded by $8y^{2}$.
    \end{proof}

    \begin{lemma}[Chernoff bound for Binomial variables]
        \label{lem:chernoff} Let $n\in\N$ and let $Y\sim\mathrm{Bin}(n,p)$ with $p
        \in[0,1]$. Then for every $q\in[p,1]$,
        \[
            \Pr\!\left(\frac{Y}{n}\ge q\right)\le \exp\!\bigl(-n\,\kl{q}{p}\bigr)
            ,
        \]
        where for $p,q\in[0,1]$,
        \[
            \kl{q}{p}:= q\log\!\frac{q}{p}+ (1-q)\log\!\frac{1-q}{1-p}.
        \]
    \end{lemma}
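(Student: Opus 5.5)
The Chernoff bound (Lemma~\ref{lem:chernoff}) follows from the exponential Markov inequality, optimized over the exponent parameter. The boundary cases are immediate. If $q=p$, then $\kl{q}{p}=0$ and the bound is trivial. If $p=0$ and $q>0$, then $\kl{q}{p}=\infty$ and $\Pr(Y/n\ge q)=0$. If $q=1$, then $\Pr(Y=n)=p^n=\exp(-n\log(1/p))=\exp(-n\,\kl{1}{p})$, so equality holds. So we may assume $0<p<q<1$.

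\textbf{Step 1 (moment generating function).} For every $\lambda\ge 0$, Markov's inequality applied to $e^{\lambda Y}$, together with the independence of the $n$ Bernoulli summands of $Y$, gives
\[
\Pr(Y\ge nq)\le e^{-\lambda nq}\,\E e^{\lambda Y}=\exp\!\Bigl(-n\bigl[\lambda q-\log(1-p+pe^{\lambda})\bigr]\Bigr).
\]

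\textbf{Step 2 (optimize over $\lambda$).} Define $h(\lambda)=\lambda q-\log(1-p+pe^{\lambda})$. This function is concave, so its maximum is at the stationary point, where
\[
\frac{pe^{\lambda}}{1-p+pe^{\lambda}}=q .
\]
Solving gives $e^{\lambda^*}=\frac{q(1-p)}{p(1-q)}$. Because $q>p$, this quantity exceeds $1$, so $\lambda^*>0$ and is admissible in Step 1.

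\textbf{Step 3 (identify the exponent).} At $\lambda^*$ we have
\[
1-p+pe^{\lambda^*}=(1-p)\Bigl(1+\frac{q}{1-q}\Bigr)=\frac{1-p}{1-q}.
\]
Substituting,
\[
h(\lambda^*)=q\log\frac{q(1-p)}{p(1-q)}-\log\frac{1-p}{1-q}=q\log\frac qp+(1-q)\log\frac{1-q}{1-p}=\kl{q}{p}.
\]
Inserting this into Step 1 gives the claimed bound $\Pr(Y/n\ge q)\le\exp(-n\,\kl{q}{p})$.

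The only delicate point is the bookkeeping in Step 3 and the edge cases $p\in\{0\}$ and $q\in\{p,1\}$ handled at the start; everything else is routine.
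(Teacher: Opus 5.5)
Your argument is correct and complete: it is the standard Cram\'er--Chernoff proof, namely the exponential Markov inequality evaluated at the tilt $e^{\lambda^*}=\frac{q(1-p)}{p(1-q)}$. The paper states this lemma without proof as a standard tool, so there is no different route to compare against. Your edge cases are handled soundly under the usual conventions $0\log 0=0$ and $\log(q/0)=+\infty$. The concavity remark in Step 2 is not needed, since substituting the admissible $\lambda^*>0$ into Step 1 already gives the bound.
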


    \begin{lemma}[Anti-concentration for Binomial upper tails \cite{zz2020}]
        \label{lem:zhang-zhou} For any $\beta > 1$ there exist constants $c_{\beta}
        , C_{\beta} > 0$, depending only on $\beta$, such that

        \begin{align*}
            \mathbb{P}(\hat{p}_{n} - p \geq \epsilon) & \geq c_{\beta} \exp \paren{-C_\beta n \kl{\epsilon + p}{p}},                                    \\
                                                      & \qquad\text{if $0 \leq \epsilon \leq \tfrac{1-p}{\beta}$ and $\epsilon + p \geq \tfrac{1}{n}$}; \\
                                                      & = 1 - (1-p)^{n}, \quad\text{if $0 < \epsilon + p < \tfrac{1}{n}$}.
        \end{align*}
    \end{lemma}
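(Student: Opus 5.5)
The second alternative is a direct computation, so I will dispose of it first and then put the real work into the first alternative. Write $Y=n\hat p_n\sim\mathrm{Bin}(n,p)$ and $q\coloneqq p+\epsilon$. If $0<q<1/n$, then $\{\hat p_n\ge q\}=\{Y\ge nq\}=\{Y\ge 1\}$, because $0<nq<1$ and $Y$ is integer-valued. Hence the probability is exactly $1-(1-p)^n$.

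For the first alternative, assume $q\ge 1/n$ and $\epsilon\le (1-p)/\beta$. The constraint gives $1-q\ge (1-p)(1-1/\beta)$, so $1-q$ is comparable to $1-p$ up to a $\beta$-dependent factor; this is where $\beta>1$ enters. Put $k\coloneqq\lceil nq\rceil\ge 1$ and $\sigma\coloneqq\sqrt{np(1-p)}$. I will split according to the size of $n\,\kl{q}{p}$.

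\emph{Regime (a): $n\,\kl{q}{p}\le 1$.} In this regime the target bound is just $P(Y\ge k)\ge c_\beta$. There are two sub-cases.
\begin{itemize}
\item If $\sigma\le 1$, then $q\ge 1/n$ together with $\kl{q}{p}\le 1/n$ forces $nq=O(1)$ and $k=O(1)$. In this Poisson-like range, $P(Y\ge k)$ can be bounded below directly from the Poisson limit: it is at least $P(Y=k)$, which is bounded below by a constant when $k$ is bounded.
\item If $\sigma\ge 1$, I will use the quadratic lower bound $\kl{q}{p}\gtrsim \epsilon^2/(q(1-p))$, which is comparable to $\epsilon^2/(p(1-p))$ when $\epsilon\lesssim p$. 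This shows $n\epsilon\lesssim\sigma$, so $k\le np+C\sigma$. Berry--Esseen, or a standard binomial result $P(Y\ge np+t\sigma)\ge c(t)$, then gives a constant lower bound.
\end{itemize}

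\emph{Regime (b): $n\,\kl{q}{p}\ge 1$.} I will sum point masses over a window. Let $w\coloneqq\lfloor\sqrt{k(1-k/n)}\rfloor\vee 1$ and write
\[
P(Y\ge k)\ge\sum_{j=0}^{w}P(Y=k+j).
\]
Each term is bounded below via Stirling:
\[
P(Y=m)\ge \frac{1}{\sqrt{8m(1-m/n)}}\exp\bigl(-n\,\kl{m/n}{p}\bigr),\qquad 1\le m\le n-1,
\]
with the case $m=n$ checked by hand. The window length cancels the prefactor $1/\sqrt{m(1-m/n)}$ up to constants. It then remains to show that
\[
n\,\kl{(k+j)/n}{p}\le n\,\kl{q}{p}+O_\beta(1)\qquad\text{for }0\le j\le w+1.
\]
Given that, the additive $O_\beta(1)$ is absorbed into the multiplicative constant $C_\beta$, since $n\,\kl{q}{p}\ge 1$.

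\emph{Main obstacle: the KL increment.} This bound on $n\,\kl{(k+j)/n}{p}$ is the step I expect to be hardest. My plan is to use the mean value theorem, writing $\partial_q\kl{q}{p}=\log\frac{q(1-p)}{p(1-q)}$, so that the increment is at most
\[
(w+2)\,\sup_{q'\in[q,(k+w+1)/n]}\log\frac{q'(1-p)}{p(1-q')}.
\]
I will bound this in two ranges.
\begin{itemize}
\item When $q\le 2p$, the logarithm is at most about $\epsilon/p$, and the resulting bound $(w+2)\epsilon/p$ must be checked against $\sqrt{n\,\kl{q}{p}}\approx n\epsilon/\sigma$. The product is then $O(n\,\kl{q}{p})$, not $O(1)$. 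I would fix this by shrinking the window to $w'\approx\sigma/\sqrt{n\,\kl{q}{p}}$ and using the resulting polynomial loss $\log(\sqrt{k}/w')\lesssim \log(n\,\kl{q}{p})$, which is again absorbed into $C_\beta\, n\,\kl{q}{p}$.
\item When $q>2p$, the logarithm is $O_\beta(\log(q/p))$, while $n\,\kl{q}{p}\gtrsim nq\log(q/p)\ge k\log(q/p)$, so a window of $\sqrt{k}$ costs at most $\sqrt{k}\log(q/p)\le n\,\kl{q}{p}$, which is absorbable.
\end{itemize}
Keeping $1-q'$ bounded away from $0$ in both ranges is exactly where $\epsilon\le(1-p)/\beta$ is used.
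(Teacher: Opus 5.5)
The paper does not prove this lemma. It imports it from \citet{zz2020} and uses it as a black box (with $\beta=2$) in the proof of Theorem~\ref{thm:lgc-bound} and in the appendix anti-concentration proposition. Your proposal is therefore a self-contained substitute for a citation, and the overall route is sound. The case $0<p+\epsilon<1/n$ is exact. The point-mass bound is the standard $\binom{n}{m}\ge(8n\alpha(1-\alpha))^{-1/2}e^{nH(\alpha)}$ with $\alpha=m/n$, combined with $p^m(1-p)^{n-m}=e^{-n(H(\alpha)+\kl{\alpha}{p})}$. A window of width $\asymp\sqrt{k(1-k/n)}$ does cancel the prefactor, since $(k+j)(1-(k+j)/n)\le 2k(1-k/n)$ on it. What your route buys over the citation is an explicit view of where $\beta>1$ enters.

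The argument can also be streamlined. You never need the additive $O_\beta(1)$ increment you first aim for. Since $C_\beta$ may be large, an increment $O_\beta(n\kl{q}{p}+1)$ suffices. With the full window, the range $q\le 2p$ already gives $O_\beta(\sqrt{n\kl{q}{p}}+1)$, so no window shrinking is needed. The same estimate is $O_\beta(1)$ when $n\kl{q}{p}\le 1$, so regime (a) folds into (b). That is worth doing, because Berry--Esseen alone does not close your sub-case $\sigma\ge1$: its error is of order $1/\sigma$ and beats $1-\Phi(t)$ only once $\sigma\ge\sigma_0(t)$. Your fallback ``standard binomial result'' is left unnamed.

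Several steps fail as written, all at the boundary $p\to1$ or $k\to n$. That is exactly where the hypothesis $\epsilon\le(1-p)/\beta$ does its work, and there your sketch silently assumes $p\le 1/2$, which the lemma does not.

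(i) In sub-case $\sigma\le1$, the claim $nq=O(1)$ is false for $p$ near $1$; for example $p=1-1/n$ gives $k\asymp n$. There, use $\mathbb{P}(Y\ge k)\ge p^n\ge e^{-n(1-p)/p}>e^{-4}$. For $p\le 1/2$, ``$\mathbb{P}(Y=k)$ is bounded below'' also needs $np\gtrsim 1$. This follows from $nq\ge 1$, $n\kl{q}{p}\le 1$ and $\kl{q}{p}\ge q\log(q/p)-\epsilon$.

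(ii) For $q\le 2p$, the derivative is not ``about $\epsilon/p$''. The correct bound, using (iii), is $\log\frac{q'(1-p)}{p(1-q')}\lesssim_\beta \frac{\epsilon+(w+2)/n}{p(1-p)}$, to be compared with $\sqrt{n\kl{q}{p}}\asymp_\beta n\epsilon/\sigma$ where $\sigma=\sqrt{np(1-p)}$. Without the factor $1/(1-p)$ the estimate is simply false for $p$ near $1$.

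(iii) The hypothesis gives $1-q\ge(1-\beta^{-1})(1-p)$, but it does not keep $1-q'$ away from $0$ on your mean-value interval. When $n-k\le 2$, that interval reaches $1$, where $\partial_q\kl{q}{p}=+\infty$. For $n-k\ge 3$ you have $w\le\sqrt{n-k}$, hence $n-k-w-1\gtrsim n-k$, and this is what actually yields $1-q'\gtrsim_\beta 1-p$. For $n-k\le 2$, argue directly: $\mathbb{P}(Y\ge k)\ge p^n$, and $n\kl{q}{p}\ge nq\log(1/p)-3-3\log\frac{\beta}{\beta-1}$ with $nq\ge n/4$.

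Finally, in the range $q>2p$ the inequality is $nq\ge k/2$ (from $nq\ge 1$ and $nq>k-1$), not $nq\ge k$. The weaker form is all you need.
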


    \begin{lemma}[\citet{blanchard2023tight}]
        \label{lem:kl-upper-bound} Let $0 \le q, \epsilon \le \frac{1}{4}$ and suppose
        $\epsilon \ge 8q$. Define $h(u) \eqdef (1+u)\ln(1+u)-u$. Then,
        \[
            \frac{\epsilon}{2}\ln \frac{\epsilon}{q}\le q h\left(\frac{\epsilon}{q}
            \right) \le \kl{q+\epsilon}{q}\le 2\epsilon \ln \frac{\epsilon}{q}.
        \]
        Also, for any $0 \le q, \epsilon \le 1$ with $q + \epsilon \le \frac{1}{2}$,
        \[
            \frac{\epsilon^{2}}{2(q+\epsilon)}\le q h\left(\frac{\epsilon}{q}\right
            ) \le \kl{q+\epsilon}{q}\le \frac{\epsilon^{2}}{q}.
        \]
    \end{lemma}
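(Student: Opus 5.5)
The plan is to work from the explicit formula
\[
\kl{q+\epsilon}{q}=(q+\epsilon)\ln\Bigl(1+\tfrac{\epsilon}{q}\Bigr)+(1-q-\epsilon)\ln\Bigl(1-\tfrac{\epsilon}{1-q}\Bigr),
\]
writing $u:=\epsilon/q$. The first step is the common lower bound $qh(u)\le\kl{q+\epsilon}{q}$, which is shared by both regimes. I will apply $\ln(1-v)\ge -v/(1-v)$ with $v=\epsilon/(1-q)$. The factor $1-v=(1-q-\epsilon)/(1-q)$ then cancels against the prefactor $(1-q-\epsilon)$, so the second summand is at least $-\epsilon$. Hence
\[
\kl{q+\epsilon}{q}\ge (q+\epsilon)\ln(1+u)-\epsilon=q\bigl[(1+u)\ln(1+u)-u\bigr]=qh(u).
\]
For the upper bounds I will keep the exact form of the second summand. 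Using $\ln(1-v)\le -v$, it is at most $-\epsilon(1-q-\epsilon)/(1-q)\le 0$.

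\textbf{Regime $\epsilon\ge 8q$, so $u\ge 8$.} For the left inequality I will write $qh(u)=(q+\epsilon)\ln(1+u)-\epsilon\ge \epsilon\ln u-\epsilon$. This is at least $\tfrac{\epsilon}{2}\ln u$ as soon as $\ln u\ge 2$, i.e.\ $u\ge e^2\approx 7.39$, which holds since $u\ge 8$. For the right inequality I will drop the nonpositive second summand, leaving $\kl{q+\epsilon}{q}\le (q+\epsilon)\ln(1+u)$. Since $q\le \epsilon/8$, we have $q+\epsilon\le\tfrac98\epsilon$ and $1+u\le\tfrac98 u$. Therefore
\[
\kl{q+\epsilon}{q}\le \tfrac98\epsilon\bigl(\ln u+\ln\tfrac98\bigr)\le 2\epsilon\ln u,
\]
where the last step uses $\ln u\ge\ln 8$ to absorb the constant $\tfrac98\ln\tfrac98$.

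\textbf{Regime $q+\epsilon\le\tfrac12$.} For the left inequality I will use the standard bound $h(u)\ge \frac{u^2}{2(1+u/3)}\ge\frac{u^2}{2(1+u)}$, checked by comparing derivatives at $u=0$. Multiplying by $q$ gives $qh(u)\ge \frac{\epsilon^2}{2(q+\epsilon)}$. For the upper bound I will use $\ln(1+u)\le \frac{u(2+u)}{2(1+u)}$ for $u\ge0$, which gives $(q+\epsilon)\ln(1+u)\le \epsilon+\frac{\epsilon^2}{2q}$. Adding the bound on the second summand gives
\[
\kl{q+\epsilon}{q}\le \frac{\epsilon^2}{2q}+\frac{\epsilon^2}{1-q}=\frac{\epsilon^2(1+q)}{2q(1-q)},
\]
and this is at most $\epsilon^2/q$ exactly when $q\le 1/3$.

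The main obstacle is the sliver $q\in(1/3,1/2]$, where necessarily $\epsilon\le 1/6$; there the chi-square bound alone only gives $\epsilon^2/(q(1-q))$. For this case I would first try expanding $\kl{q+\epsilon}{q}$ as a series in $\epsilon$. Its leading coefficient is $\frac{1}{2q(1-q)}\le \frac{9}{8}\cdot\frac1q$ on this range, but that already exceeds $1/q$, so the higher-order terms would need a favorable sign, which is not clear. If this fails I would use an exact computation $\frac{d^2}{d\epsilon^2}\kl{q+\epsilon}{q}=\frac{1}{(q+\epsilon)(1-q-\epsilon)}$. With $t=q+\epsilon\in[q,\tfrac12]$ this is at most $\frac{1}{t(1-t)}\le\frac{1}{q(1-q)}$. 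I am not yet sure that this yields the stated constant $1$ rather than roughly $3/2$. If the constant-$1$ upper bound genuinely fails for $q$ near $\tfrac12$, I would record the bound with constant $\tfrac{1}{1-q}\le 2$ and note that later uses only need it up to a universal constant.
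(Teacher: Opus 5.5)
The paper does not prove this lemma; it is quoted from \citet{blanchard2023tight}, so your argument has to stand on its own. Most of it is correct:
\begin{itemize}
\item the lower bound $qh(u)\le\kl{q+\epsilon}{q}$ via $\ln(1-v)\ge -v/(1-v)$;
\item both estimates in the regime $\epsilon\ge 8q$;
\item the bound $qh(u)\ge \epsilon^2/(2(q+\epsilon))$;
\item the estimate $\kl{q+\epsilon}{q}\le \frac{\epsilon^2(1+q)}{2q(1-q)}$.
\end{itemize}
For the third item, matching derivatives at $u=0$ does not by itself prove $h(u)\ge\frac{u^2}{2(1+u/3)}$. Compare second derivatives instead, $\frac{1}{1+u}\ge\frac{27}{(3+u)^3}$.

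The genuine gap is that you never establish $\kl{q+\epsilon}{q}\le \epsilon^2/q$ for $q\in(\tfrac13,\tfrac12]$. Your fallback with constant $\frac{1}{1-q}\le 2$ would prove a strictly weaker statement than the lemma. Your doubt comes from an arithmetic slip. The leading coefficient $\frac{1}{2q(1-q)}$ never exceeds $\frac1q$ on this range, because $\frac{1}{2q(1-q)}\le\frac1q$ is equivalent to $2(1-q)\ge 1$, i.e.\ $q\le\tfrac12$. Equality holds only at $q=\tfrac12$, where the constraint $q+\epsilon\le\tfrac12$ forces $\epsilon=0$. The constant $\tfrac32$ you worry about comes from the lossy estimate $\frac{1+q}{2(1-q)}$, not from the divergence itself.

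Your second idea closes the gap, and in one stroke it covers the whole regime $q+\epsilon\le\tfrac12$ (with $q>0$). Put $f(s):=\kl{q+s}{q}$. Then $f(0)=f'(0)=0$ and $f''(s)=\frac{1}{(q+s)(1-q-s)}$. For $s\in[0,\epsilon]$ we have $q\le q+s\le\tfrac12$, and $t\mapsto t(1-t)$ is nondecreasing on $[0,\tfrac12]$. Hence $f''(s)\le\frac{1}{q(1-q)}$, and Taylor's formula with integral remainder gives
\[
\kl{q+\epsilon}{q}=\int_0^{\epsilon}(\epsilon-s)\,f''(s)\,ds\le\frac{\epsilon^2}{2q(1-q)}\le\frac{\epsilon^2}{q}.
\]
The factor $\tfrac12$ from integrating twice is exactly what turns the $\chi^2$-type constant $\frac{1}{1-q}$ into $\frac{1}{2(1-q)}\le 1$. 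With this, you no longer need the case split at $q=\tfrac13$ or the inequality $\ln(1+u)\le\frac{u(2+u)}{2(1+u)}$.
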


        \subsection{Anti-concentration Lower Bound}
    \begin{proposition}
        Let $S(p) \coloneqq \sup_{j \ge 1}p_{j} \log(j+1)$ and let $j'$ be an index
        such that $p_{j'}\log(j'+1) \ge S(p)/2$. Then for any $k > 0$, we have
        \[
            \mathbb{P}\!\left( \sup_{j}(\hat{p}_{j}- p_{j}) \ge \tau \right) \ge
            (1-e^{-1}) \left(1 \wedge \frac{c_{2}}{2(j'+1)^{2k^2-1}}\right),
        \]
        where $\tau \coloneqq \left(k\sqrt{\frac{S(p)}{2C n}}\wedge \frac{1}{4}\right
        ) \vee \frac{1}{n}$, and $c_{2}, C$ are universal constants from \lemref{lem:zhang-zhou}.
    \end{proposition}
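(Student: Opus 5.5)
The plan is to lower bound the probability of the supremum by a product-type bound over the first $j'$ coordinates, using negative association of the multinomial counts (\lemref{lem:negative_covariance_multinomial}) together with the anti-concentration bound of \lemref{lem:zhang-zhou} for each coordinate. Since $p$ is sorted, $p_i\ge p_{j'}$ for all $i\le j'$. Each such coordinate exceeds its mean by $\tau$ with probability at least $q$, where $q$ is the value of the bound at the smallest relevant mass $p_{j'}$. For negatively associated indicators of increasing events $A_i=\{\hat p_i-p_i\ge\tau\}$, we have $\mathbb{P}(\bigcap_{i\le j'} A_i^c)\le\prod_{i\le j'}\mathbb{P}(A_i^c)\le (1-q)^{j'}\le e^{-j' q}$. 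Hence $\mathbb{P}(\sup_j(\hat p_j-p_j)\ge\tau)\ge 1-e^{-j'q}\ge(1-e^{-1})(1\wedge j'q)$, using $1-e^{-x}\ge(1-e^{-1})(x\wedge 1)$. It therefore suffices to show $q\ge c_2/(2 j' (j'+1)^{2k^2-1})$, or at least something of the form $j'q\ge c_2/(2(j'+1)^{2k^2-1})$.

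To bound $q$, I apply \lemref{lem:zhang-zhou} with $\beta=2$ at $p=p_i\le 1/2$ (after the reduction $p_i\mapsto 1-p_i$ used earlier in the paper). The requirement $\tau\le(1-p_i)/2$ holds because $\tau\le 1/4$ in the first branch. If instead $\tau=1/n$, then $\tau+p_i\ge 1/n$ holds automatically. The alternative branch $\epsilon+p<1/n$ cannot occur, since $\tau\ge 1/n$. This gives $\mathbb{P}(A_i)\ge c_2\exp(-Cn\,\kl{p_i+\tau}{p_i})$. Next I use the upper bound $\kl{q+\epsilon}{q}\le \epsilon^2/q$ from \lemref{lem:kl-upper-bound}, valid because $p_i+\tau\le 1/2$ given $p_i\le 1/4$. (The case of large $p_i$ needs a separate, cruder treatment, or is absorbed into the ``$1\wedge$''.) With $\tau\le k\sqrt{S(p)/(2Cn)}$ and $p_i\ge p_{j'}\ge S(p)/(2\log(j'+1))$, this yields
\[
Cn\,\kl{p_i+\tau}{p_i}\le \frac{Cn\tau^2}{p_{j'}}\le \frac{k^2 S(p)/2}{S(p)/(2\log(j'+1))}=k^2\log(j'+1).
\]
So each $\mathbb{P}(A_i)\ge c_2 (j'+1)^{-k^2}$, which is in fact stronger than needed. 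The exponent $2k^2$ in the statement suggests the authors lose a factor $2$, perhaps via the $\epsilon^2/(q+\epsilon)$ variant or a cruder comparison, and I would aim for the weaker form to keep margin. Then $j'q\ge c_2 j'(j'+1)^{-2k^2}\ge \tfrac{c_2}{2}(j'+1)^{1-2k^2}$, using $j'\ge (j'+1)/2$, which matches the claim exactly.

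The main obstacle is the case $\tau=1/n>k\sqrt{S(p)/(2Cn)}$, where the quadratic KL estimate no longer follows from the definition of $\tau$. Here I would use $n\,\kl{p_i+1/n}{p_i}\le 1/(np_i)$ and split on whether $np_{j'}\gtrsim 1$. If it is, the exponent is bounded by a constant, which is absorbed into $c_2$. Otherwise, the direct bound $\mathbb{P}(\hat p_i\ge p_i+1/n)\ge \mathbb{P}(c_i\ge 2)$ or a Poisson-type estimate should still give a comparable rate. A secondary issue is rigorously justifying the product inequality for the complements of the increasing events $A_i$ under negative association. I would cite the Dubhashi--Ranjan property that indicators of monotone events on disjoint coordinates of an NA family are again NA, and then apply that property to the complements.
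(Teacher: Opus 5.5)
\textbf{Main regime: correct and essentially the paper's argument.} When $\tau\le k\sqrt{S(p)/(2Cn)}$, i.e. in the first branch or the $\frac14$ clip, your argument is the paper's. It uses Zhang--Zhou per coordinate, a quadratic upper bound on the KL divergence, the negative-association product inequality, and truncation at $j'$ via monotonicity of $p$. The paper's proof treats only the regime $\tau=k\sqrt{S(p)/(2Cn)}$. Your concern about $p_i\in(\frac14,\frac12]$ is settled by the $\chi^2$ bound $\kl{p_i+\tau}{p_i}\le \tau^2/(p_i(1-p_i))\le 2\tau^2/p_i$, which gives exactly the exponent $2k^2$.

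\textbf{The gap: the remaining cases are false, not merely unproved.}

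\emph{The case $\tau=1/n$.} Take $p$ uniform on $\{1,\dots,A\}$ with $n$ fixed and $A\to\infty$. Eventually $\tau=1/n$. Once $A>n$, $\hat p_j-p_j\ge 1/n$ holds iff $c_j\ge 2$, so the left side is at most $\binom{n}{2}/A$. Yet $j'=A$ is admissible, and the right side equals $(1-e^{-1})\bigl(1\wedge\frac{c_2}{2}(A+1)^{1-2k^2}\bigr)$. For every $k<1$ this is of larger order than $1/A$. Per coordinate, $\mathbb{P}(c_i\ge 2)\approx (n/A)^2/2$, so no ``Poisson-type estimate'' can give a rate comparable to $(j'+1)^{-2k^2}$. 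Even in your subcase $np_{j'}\gtrsim 1$, absorbing a factor into $c_2$ is not permitted, because the statement fixes $c_2$ as the Zhang--Zhou constant.

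\emph{The reflection $p_i\mapsto 1-p_i$ is unavailable.} The event $\{\sup_j(\hat p_j-p_j)\ge\tau\}$ is one-sided, and the reflection exchanges the upper and lower tails of the binomial. The paper used it only for two-sided quantities. For the point mass $p=(1,0,\dots)$ one has $\sup_j(\hat p_j-p_j)=0$ almost surely, while the right side is positive for every $k>0$.

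\textbf{What can be salvaged.} The proposition needs extra hypotheses, for instance that the $\vee\frac1n$ clip is inactive and that $p_1\le\frac12$. The latter guarantees the Zhang--Zhou side condition $\tau\le(1-p_j)/2$, which the paper's proof also leaves unchecked. Under these hypotheses, your argument with the $\chi^2$ bound above is complete.
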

    \begin{proof}
        Let $\tau \coloneqq \left(k\sqrt{\frac{S(p)}{2C n}}\wedge \frac{1}{4}\right
        ) \vee \frac{1}{n}$. Note that for all $j$, $p_{j}+ \tau \ge \frac{1}{n}$.
        Invoking \lemref{lem:kl-upper-bound} and \lemref{lem:zhang-zhou}, we have
        that
        \[
            \mathbb{P}(\hat{p}_{j}- p_{j} \ge \tau) \ge c_{2}\exp\!\left( - C_{2}
            n \, \kl{p_{j} + \tau}{p_{j}}\right).
        \]
        In the regime where $\tau = k\sqrt{\frac{S(p)}{2C n}}$, using \lemref{lem:kl-scaling-global}
        yields
        \begin{align*}
            \mathbb{P}(\hat{p}_{j}- p_{j} \ge \tau) & \ge c_{2}\exp\!\left( - C_{2}n \, \frac{k^{2} S(p)}{2 C n p_{j}}\right) \\
                                                    & = c_{2}\exp\!\left( - k^{2}\frac{S(p)}{p_{j}}\right).
        \end{align*}
        As discussed in \thmref{thm:least-favorable}, the estimators $\hat p_{j}$
        are negatively associated. This implies that for the increasing events
        $A_{j}=\{ \hat p_{j} - p_{j} \ge \tau \}$, we have $\mathbb{P}(\cap A_{j}
        ^{c}) \le \prod \mathbb{P}(A_{j}^{c})$. Consequently,
        \[
            \mathbb{P}(\cup A_{k}) \ge (1-e^{-1})(1 \wedge \sum \mathbb{P}(A_{k})
            ):
        \]
        \[
            \mathbb{P}\!\left( \sup_{j}(\hat{p}_{j}- p_{j}) \ge \tau\right)
            \ge (1-e^{-1})\left(1 \wedge \sum_{j} c_{2}\exp\!\left(-k^{2}\frac{S(p)}{p_{j}}\right)\right).
        \]
        By definition of $S(p) = \sup_{j \ge 1}p_{j} \log(j+1)$, there exists an
        index $j'$ such that
        \[
            p_{j'}\log(j'+1) \ge \frac{1}{2}S(p) \implies \frac{S(p)}{p_{j'}}\le
            2\log(j'+1).
        \]
        Since $p$ is sorted non-increasingly, for all $j \le j'$,
        $p_{j} \ge p_{j'}$. Thus, we can lower bound the sum by truncating at
        $j'$:
        \begin{align*}
            \sum_{j}c_{2}\exp\!\left( - k^{2}\frac{S(p)}{p_{j}}\right) & \ge \sum_{j=1}^{j'}c_{2}\exp\!\left( - k^{2}\frac{S(p)}{p_{j'}}\right) \\
                                                                       & \ge j' c_{2}\exp\!\left( - 2k^{2}\log(j'+1)\right)                     \\
                                                                       & = \frac{c_{2}j'}{(j'+1)^{2k^2}}                                        \\
                                                                       & \ge \frac{c_{2}}{2(j'+1)^{2k^2-1}}.
        \end{align*}
        Substituting this back into our lower bound for the probability of the union:
        \[
            \mathbb{P}\!\left( \sup_{j}(\hat{p}_{j}- p_{j}) \ge \tau\right)
            \ge (1-e^{-1})\left(1 \wedge \frac{c_{2}}{2(j'+1)^{2k^2-1}}\right).
        \]
        This provides a polynomial lower bound in terms of the truncation index
        $j'$, which is related to the tail decay of the distribution.
    \end{proof}

    \section{Experiments}
    \label{sec:experiments}

    We compare our fully empirical high-probability bound from \thmref{thm:empirical-Vstar}
    to the bounds of \citet{KonPai24}, reproducing their experimental protocol
    and figures, and then adding our bound as an additional curve.

    \paragraph{Protocol (matching \citet{KonPai24}).}
    For each sample size $n$, we draw $R=1000$ i.i.d.\ samples $X^{n}\sim p$, compute
    the empirical pmf $\hat p$, and evaluate each candidate upper bound on
    $\|\hat p - p\|_{\infty}$. We also plot an \emph{oracle reference curve}
    defined as the empirical $(1-\delta)$-quantile of $\|\hat p - p\|_{\infty}$
    across the $R$ repetitions (this is not a bound computable from data, but
    serves as a scale reference). All curves are shown on a log--log scale in $n$.

    \paragraph{Distributions and datasets.}
    We use the same synthetic distributions as \citet{KonPai24}: (i) the uniform
    distribution over $A=100$ symbols, and (ii) a truncated Zipf law with parameter
    $s=1.1$ over $A=100$ symbols. We also reproduce the two real-world datasets
    from \citet{KonPai24} (Surnames and Hamilton), treating the dataset's
    empirical histogram as the ground-truth distribution $p$ and repeatedly
    resampling $X^{n}$ from this $p$.

    \paragraph{Confidence levels and sample-size grids.}
    We consider two choices of confidence: (a) fixed $\delta=0.05$ (Figure~\ref{fig:fig1_ours}
    and Figure~\ref{fig:fig3_ours}), and (b) $\delta = 1/n^{2}$ (Figure~\ref{fig:fig2_ours}).
    The grids of $n$ match the reproduction notebook: Figure~\ref{fig:fig1_ours}
    uses $n \in \{10^{4},2\cdot 10^{4},\dots,2\cdot 10^{5}\}$, Figure~\ref{fig:fig2_ours}
    uses $n \in \{10^{5},2\cdot 10^{5},\dots,10^{6}\}$, and Figure~\ref{fig:fig3_ours}
    uses $n \in \{10^{4},2\cdot 10^{4},\dots,5\cdot 10^{5}\}$.

    \paragraph{Bounds compared.}
    We plot the two bounds from \citet{KonPai24} that appear in their figures (labeled
    "K--P (Thm 2)" and "K--P (Thm 4)"), their benchmark curve, and our new bound.
    Our curve ("Ours (Thm 2.2)") is obtained by evaluating the fully empirical
    plug-in quantities $\hat V^{*}$ and $\hat v^{*}$ from \secref{sec:main_results},
    as in \thmref{thm:empirical-Vstar}. For the plots we instantiate \thmref{thm:empirical-Vstar}
    using the explicit constants currently implemented in the code (a direct
    transcription of the current proof-level constants):
    \begin{align*}
        \mathrm{Bound}_{\mathrm{ours}}(\hat p,n,\delta)
        &= 2\sqrt{\frac{\hat V^{*}}{n} + \frac{\hat v^{*}}{n}\log\frac{2}{\delta}}
        + \frac{1}{n}\log\!\Bigl(\frac{6n(n+1)}{\delta}\Bigr) \\
        &\qquad + \frac{4}{3n}\log\!\Bigl(\frac{2(n+1)}{\delta}\Bigr) + \frac{\log n}{n}.
    \end{align*}
    (These constants are not optimized; the purpose of the experiments is a like-for-like
    comparison under the same plotting protocol.)

    \paragraph{Results.}
    Our fully empirical bound consistently improves upon the benchmark curve
    across all distributions and confidence levels. In settings with heavier tails
    (Zipf and real-world datasets), our bound substantially outperforms the
    competing empirical bound "K--P (Thm 4)" and approaches the tightness of the
    population-dependent "K--P (Thm 2)" bound (Figure~\ref{fig:fig1_ours} (right)
    and Figure~\ref{fig:fig3_ours}). For the uniform distribution, where the
    tail complexity is minimal, "K--P (Thm 4)" remains tighter than our bound (Figure~\ref{fig:fig1_ours}
    (left)). In the vanishing-confidence regime $\delta=1/n^{2}$, our bound is
    robust, particularly for the Zipf distribution where it nearly matches the population-based
    "K--P (Thm 2)" curve (Figure~\ref{fig:fig2_ours}).

    \begin{figure*}[t]
        \centering
        \begin{subfigure}
            [t]{\textwidth}
            \centering
            \includegraphics[width=\linewidth]{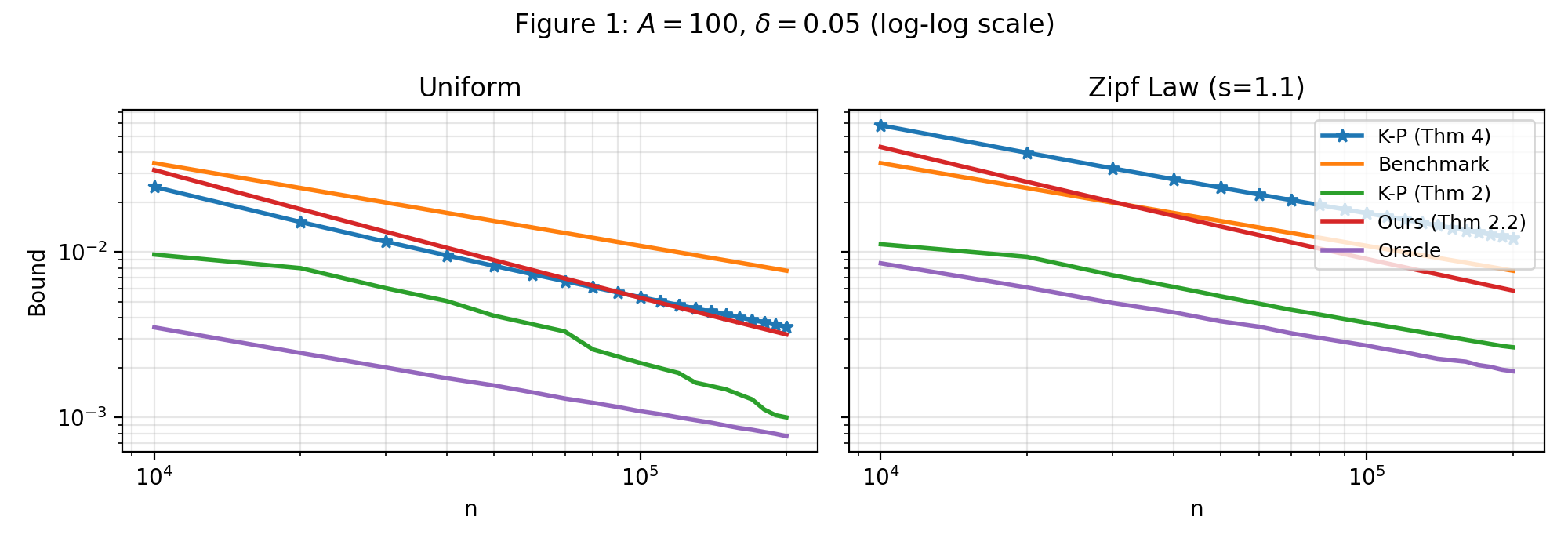}
            \caption{Synthetic distributions with $A=100$ and fixed confidence
            $\delta=0.05$ (log--log scale).}
            \label{fig:fig1_ours}
        \end{subfigure}
        \hfill
        \begin{subfigure}
            [t]{\textwidth}
            \centering
            \includegraphics[width=\linewidth]{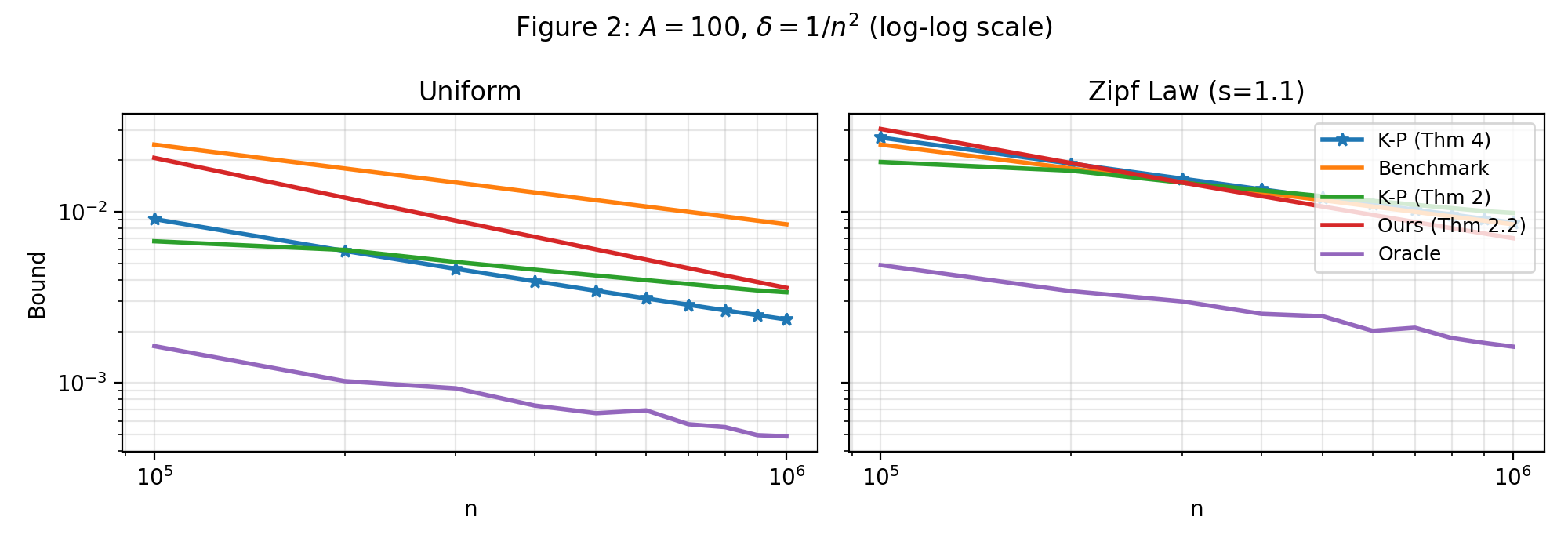}
            \caption{Synthetic distributions with $A=100$ and $\delta=1/n^{2}$ (log--log
            scale).}
            \label{fig:fig2_ours}
        \end{subfigure}

        \vspace{0.5em}

        \begin{subfigure}
            [t]{\textwidth}
            \centering
            \includegraphics[width=\linewidth]{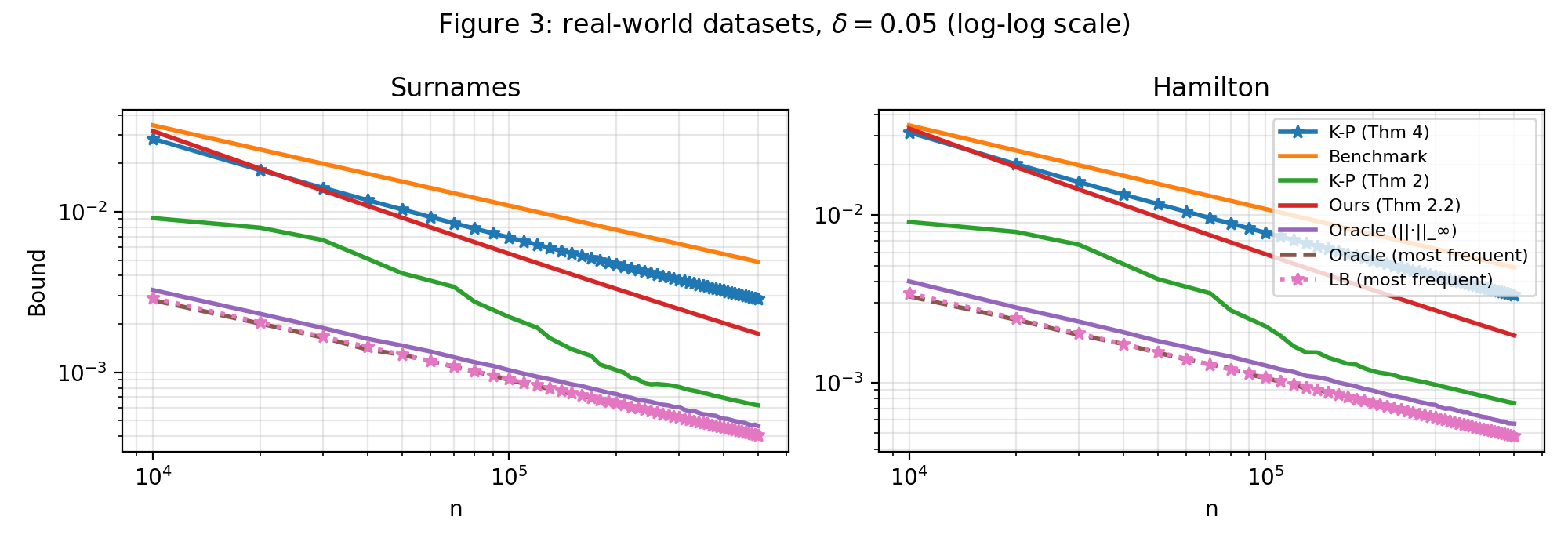}
            \caption{Real-world datasets (Surnames and Hamilton) with
            $\delta=0.05$ (log--log scale).}
            \label{fig:fig3_ours}
        \end{subfigure}
        \caption{Comparison of bounds reproducing \citet{KonPai24} and adding our
        bound from \thmref{thm:empirical-Vstar}.}
        \label{fig:all_experiments}
    \end{figure*}

\end{document}